\def\eqref#1{equation~\ref{#1}}
\def\1{\bm{1}}
\DeclareMathAlphabet{\mathsfit}{\encodingdefault}{\sfdefault}{m}{sl}
\SetMathAlphabet{\mathsfit}{bold}{\encodingdefault}{\sfdefault}{bx}{n}
\newcommand{\E}{\mathbb{E}}
\newcommand{\Ls}{\mathcal{L}}
\newcommand{\R}{\mathbb{R}}
\pgfplotsset{compat=1.18}
\definecolor{rose_ens}{RGB}{205,120,148}
\definecolor{bleu_ens}{RGB}{30,60,80}
 \definecolor{mygreen}{HTML}{479A5F}
\definecolor{myorange}{HTML}{E57439}
\definecolor{myprune}{HTML}{C2327A}
\newcommand{\CondEsp}[4][false]{%
  \ifthenelse{\equal{#1}{true}}%
    {\mathbb{E}_{#3 \mid #4}\left\{ #2 \mid #4 \right\}}%
    {\mathbb{E}_{#3 \mid #4}\left\{#2\right\}}%
}
\newcommand{\Esp}[2]{%
  {\mathbb{E}_{#2}\left\{ #1 \right\}}%
}
\newcommand{\x}{\boldsymbol{x}}
\newcommand{\y}{\boldsymbol{y}}
\newcommand{\noise}{\boldsymbol{\varepsilon}}
\newcommand{\boldtheta}{\boldsymbol{\theta}}
\newcommand{\A}{\boldsymbol{A}}
\newcommand{\M}{\boldsymbol{M}}
\newcommand{\Q}{\boldsymbol{Q}}
\newcommand{\tg}{\boldsymbol{T}_g}
\newcommand{\T}{\boldsymbol{T}}
\newcommand{\G}{\mathcal G}
\newcommand{\cardG}{|\G|}
\newcommand{\I}{\mathcal I}
\newcommand{\cardI}{|\mathcal I|}
\newcommand{\fr}{r}
\newcommand{\Id}{\boldsymbol{I}}
\newcommand{\LSUP}{\mathcal L_{\mathrm{SUP}}}
\newcommand{\quentin}[1]{}
\newcommand{\jeremy}[1]{}
\newcommand{\jeremyb}[1]{}
\newcommand{\victor}[1]{}
\newcommand{\changed}[1]{\textcolor{.}{#1}}
\newcommand{\changedscoped}{\color{.}}
\newcommand{\rechanged}[1]{\textcolor{.}{#1}}
\newcommand{\rechangedscoped}{\color{.}}
\newtheorem{assumption}{Assumption}
\newtheorem{lemma}{Lemma}
\newtheorem{definition}{Definition}
\def\thanks#1{%
  \g@addto@macro\@thanks{%
    \begingroup
      \renewcommand\thefootnote{}%
      \footnotetext{#1}%
    \endgroup
  }%
}
\crefname{algocf}{alg.}{algs.}
\Crefname{algocf}{Algorithm}{Algorithms}
\title{Equivariant Splitting: Self-supervised learning from incomplete data}
\author[$1$]{Victor Sechaud$^\ast$\thanks{$^\ast$The first two authors have contributed equally to this paper.}}
\author[$1, 2$]{Jérémy Scanvic$^\ast$}
\author[$2$]{Quentin Barthélemy}
\author[$1$]{Patrice Abry}
\author[$1$]{Julián Tachella}
\affil[$1$]{LPENSL, CNRS, ENS de Lyon, France}
\affil[$2$]{Prysm, Lyon, France}
\begin{document}
    \maketitle
    \vspace*{-6mm}
    \begin{abstract}
        Self-supervised learning for inverse problems allows to train a reconstruction network from noise and/or incomplete data alone.
        These methods have the potential of enabling learning-based solutions
        when obtaining ground-truth references for training is expensive or even impossible.
        In this paper, we propose a new self-supervised learning strategy devised for the challenging setting where measurements are observed via a single incomplete observation model.
        We introduce a new definition of equivariance in the context of reconstruction networks, and show that the combination of self-supervised splitting losses and equivariant reconstruction networks results in unbiased estimates of the supervised loss.
        Through a series of experiments on image inpainting, accelerated magnetic resonance imaging, sparse-view computed tomography, and compressive sensing,
        we demonstrate that the proposed loss achieves state-of-the-art performance in settings with highly rank-deficient forward models.\footnote{The code is available at \url{https://github.com/vsechaud/Equivariant-Splitting}}
    \end{abstract}

    \vspace*{-4mm}
    \section{Introduction} \label{sec:introduction}
    \vspace*{-1mm}
    Inverse problems are ubiquitous in many sensing and imaging applications. They are written as
    \begin{equation} \label{eq:forward_model}
	    \y = \A \x + \noise
    \end{equation}
    where $\A \in \mathbb{R}^{m \times n}$ is the known forward matrix, $\x \in \mathbb{R}^n$ is the ground truth image to be estimated, $\y \in \mathbb{R}^m$ is the observed measurement vector, and $\noise \in \mathbb{R}^m$ is the unknown noise, generally assumed to follow a Gaussian distribution. This model is suitable for many imaging modalities, including magnetic resonance imaging~(MRI) for medical imaging~\citep{zbontar19FastMRI}, computed tomography~(CT)~\citep{withers2021x}, microscopy~\citep{ragone2023deep}, remote sensing~\citep{fassnacht2024remote}, and astronomical imaging~\citep{vojtekova2021learning}.

    The number of linearly independent measurements is often smaller than the number of pixels in the target images due to physical and practical constraints. In this case, the forward matrix is rank-deficient, the forward process discards some of the information present in the target image. Further information about the target images is needed to solve the problem and different methods make different assumptions about the signal distribution.

    Modern learning-based solvers generally obtain state-of-the-art performance by training on supervised pairs of ground-truth images and measurements.
    For certain applications, it is expensive or even impossible to obtain enough ground-truth data for supervised training~\citep{belthangady2019applications}. This is notably the case in astronomical imaging, microscopy and medical imaging.

    Recent self-supervised methods overcome this limitation by learning to reconstruct without ground-truth data, relying only on a dataset of measurements, and they generally differ in the assumption they make on the forward model. For denoising problems where the forward matrix is the identity mapping, certain methods rely on knowing the exact noise distribution~\citep{eldar09Generalized,pang21RecorruptedtoRecorrupted,monroy24Generalized}, others only assume it is entry-wise independent~\citep{krull19Noise2Void}, while some make intermediate assumptions~\citep{tachella25UNSURE}.

    In settings where measurements are observed via multiple incomplete forward operators, such as
    accelerated MRI with masks varying across acquisitions or inpainting problems with missing pixels varying across images,
    the main approaches are splitting~\citep{yaman20SelfSupervised,millard23Theoretical} and consistency across operators~\citep{tachella2022unsupervised}.
    Splitting losses divide the measurements into input and target components, and are unbiased estimators of the supervised loss if there is enough diversity of operators in the dataset~\citep{daras23Ambient,millard23Theoretical}.

    In the more challenging case of measurement data obtained via a single incomplete forward operator, the main self-supervised approach is equivariant imaging~\citep{chen21Equivariant,chen22Robust} which makes the assumption that the target distribution is invariant to a certain group of transformations including geometric transformations~\citep{wang24PerspectiveEquivariant,scanvic25ScaleEquivariant} and range transformations such as intensity scalings~\citep{sechaud24Equivariancebased}. Experimental results show that it can
    obtain competitive performances to supervised methods
    even though it does not require ground-truth references for training~\citep{chen22Robust}.
    However, training with EI is typically slower than supervised learning, as it requires two to three evaluations of the model for every iteration, and can obtain subpar performances if the operator is highly incomplete.

    In this work, we propose equivariant splitting (ES), a new self-supervised method for learning from measurements obtained via a single forward operator that combines the invariance to transformations assumption of equivariant imaging and the simplicity and computational efficiency of splitting methods. The key idea behind our method is the use of recent developments in equivariant architectures~\citep{chaman21Truly,puny22Frame} to design a training loss that performs implicit ground truth data augmentation without any transformation overhead. Our theoretical results show that our method yields (in expectation) the minimum mean squared error (MMSE) estimator as long as the model is expressive enough, which is not guaranteed for equivariant imaging and previous splitting methods.
    \rechanged{Additionally, our experiments demonstrate state-of-the-art performance on a wide range of self-supervised imaging problems including compressed sensing, image inpainting, accelerated MRI and sparse-view CT.} This work is additional evidence that architectural constraints built upon equivariance are a powerful tool to solve ill-posed imaging inverse problems.

    Our contributions are the following:
    \begin{enumerate}
        \item We propose a new definition for equivariance in inverse problems, and propose architectures that satisfy this property, including unrolled architectures.
        \item We propose a new self-supervised loss that leverages equivariant networks (according to our definition above) whose global minimizer is the gold standard MMSE estimator
        under the assumption of an invariant signal distribution.
        \item \rechanged{We demonstrate the performance of our method on a wide range of inverse problems including compressed sensing, inpainting, accelerated MRI and sparse-view computed tomography.}
    \end{enumerate}

    \section{Related work}

    \textbf{Measurement splitting}
    Various self-supervised losses consist of dividing measurement vectors into two components, one used as input and the other as target.
    It has been used to solve oversampled single-operator inverse problems including full-view CT~\citep{hendriksen20Noise2Inverse}, and also undersampled multi-operator inverse problems with theoretical guarantees of producing estimates equivalent to supervised estimates in expectation, notably accelerated MRI and image inpainting with varying masks~\citep{daras23Ambient,millard23Theoretical}.
    To the best of our knowledge, this work is the first to extend these methods to the more challenging single-operator undersampled setting, which notably includes sparse-view CT and accelerated MRI (with fixed mask).

    \textbf{Equivariant imaging}
    It is possible to learn from incomplete data associated with a single degradation operator, as long as the underlying signal distribution is invariant to a group of transformations~\citep{tachella23Sensing}.
    Equivariant imaging~\citep{chen21Equivariant,chen22Robust} assumes that the distribution of clean images remains unchanged under certain transformations, including translations, rotations and flips, and introduces a training loss that enforces the equivariance to these transformations of the entire measurement-reconstruction process, thereby constraining the set of learnable models. It has been used effectively to solve various inverse problems using adequate groups of transformations~\citep{wang24PerspectiveEquivariant,sechaud24Equivariancebased}, but it is computationally expensive due to the two to three network evaluations.
    Moreover, the EI loss is not necessarily an unbiased estimator of the supervised loss, and it is unclear whether this approach recovers the optimal MMSE estimator in expectation.

    \textbf{Equivariant neural networks} The design of equivariant networks is an active research topic and many different approaches exist. Some rely on data augmentation to make neural networks more equivariant to rotations and flips, while other rely on averaging on the group of transformations at test time~\citep{rivera2021rotation,puny22Frame,kaba23Equivariance,sannai24Invariant}. A third line of work focuses on architectures that are equivariant by design: \citet{cohen16Group,cohen16Steerable}~propose convolutional layers that are equivariant to rotations and flips, and other works also rely on the design of more equivariant layers to improve translation-equivariance. \citet{zhang19Making,chaman21TrulyEquivariant} propose equivariant pooling and downsampling/upsampling layers, and \citet{karras21AliasFree,michaeli23AliasFree} equivariant non-linearities and activation layers.
    Closer to our work, in the specific setting of inverse problems, \citet{celledoni21Equivariant}~propose the use of equivariant denoising blocks within unrolled architectures, and~\citet{terris24Equivariant}~similarly propose to render plug-and-play denoisers more equivariant by averaging over transformations at test time. We take these analyses further by providing a clear definition of equivariance in inverse problems and showing which popular posterior estimators and related architectures verify these properties.

    \section{Background} \label{sec:background}

    Solving the inverse problem in~\cref{eq:forward_model} amounts to designing a reconstruction function $f(\y, \A) \approx \x$ estimating a ground truth signal $\x \in \mathbb R^n$ from its measurement vector $\y \in \mathbb R^m$ and forward matrix $\A \in \mathbb R^{m \times n}$.
    In practice, it is often implemented as a neural network parametrized by a set of weights.
    Supervised methods assume the existence of a finite dataset containing pairs of ground truth signals and measurements $\{(\x_i, \y_i)\}_{i\in \I}$ that are used to learn the reconstruction function $f(\y, \A)$. The main approach is to minimize a training loss equal to the mean squared error~\citep{ongie19Function}
    \begin{equation} \label{eq:def_supervised_loss}
        \min_f \left\{ \frac 1 {\cardI} \sum_{i \in \mathcal I}  \mathcal L_{\mathrm{SUP}}(\x_i, \y_i, \A, f) \right\}, \quad
        \LSUP(\x, \y, \A, f) = \| f(\y, \A) - \x \|^2.
    \end{equation}

    While this approach obtains state-of-the-art performance, it cannot be used in the absence of ground-truth data.
    Self-supervised methods overcome this limitation using a finite dataset containing only measurements $\{ \y_i \}_{i \in \I}$, and a training loss $\mathcal L(\y, \A, f)$ that need no ground truth data to be evaluated, and which is designed to approximate well the supervised objective in~\cref{eq:def_supervised_loss}.

    In denoising problems ($\A=\Id$) with Gaussian noise of known variance, Recorrupted2Recorrupted (R2R)~\citep{pang21RecorruptedtoRecorrupted} and SURE~\citep{metzler20Unsupervised} provide unbiased estimators of the supervised loss.
    The R2R loss is computed by adding synthetic Gaussian noise $\boldsymbol{\omega}\sim\mathcal{N}(\boldsymbol{0},\sigma^2\Id)$ to the measurements, creating input-target pairs as $(\y+\alpha\boldsymbol{\omega}, \y-\frac{\boldsymbol{\omega}}{\alpha})$ for some $\alpha \in (0, +\infty)$.
    However, if measurements are observed by an incomplete operator $\A$ with a non-trivial nullspace,  these losses fail to learn in the nullspace, approximating  $f(\y, \A) = \A^{\dagger}\A \, \CondEsp{\x}{\x}{\y, \A} + v(\y, \A)$, with $v$ being \emph{any} function taking values in the nullspace of $\A$~\citep{chen21Equivariant}.

    In the rest of this section, we present the two main self-supervised losses that can learn beyond the nullspace, measurement splitting in \Cref{subsec:measurement_splitting} and equivariant imaging in \Cref{subsec:equivariant_imaging}. By combining their main ideas, we obtain our new self-supervised loss introduced in~\Cref{sec:method}.

    \subsection{Measurement splitting} \label{subsec:measurement_splitting}
    One strategy to address the limitations imposed by the nullspace of a single operator is to employ multiple operators~\citep{millard23Theoretical}.
    The key idea is that different operators generally do not share the same nullspace; thus, observing measurements through the image spaces of multiple operators allows access to the whole space $\R^n$.
    Formally, they assume that measurements are obtained according to $\y \sim p(\y | \A\x)$
    where the measurement operator $\A$ is itself drawn from a distribution $p(\A)$, and differ for each acquisition.

    Splitting losses divide the measurements into two components $\y = [\y_1^\top, \y_2^\top]^\top$ with corresponding operators $\A = [\A^\top_1, \A^\top_2]^\top$, \changed{where $\A_1 \in \R^{m_1 \times n}$ and $\A_2 \in \R^{m_2 \times n}$ with $m_1 + m_2 = m$.}
    The network is then trained to predict the entire measurements vector $\y$ from only one of its two components $\y_1$, using the training loss\footnote{
        \changed{We use the notation $\mathbb{E}_{a|b}\{g(a)\}$ for $\int_{a} g(a)p(a|b)\mathrm{d}a$.}
        }
    \begin{equation} \label{eq:def_split_loss}
        \Ls_{\textrm{SPLIT}} (\y, \A, f) = \E_{\y_1, \A_1 |\y, \A} \left\{ \| \A f(\y_1, \A_1) - \y\|^2 \right\},
    \end{equation}
    where $p(\y_1,\A_1 \mid \y, \A)$ is a random splitting distribution, which is chosen on a per-problem basis.
    The loss encourages the model to learn in the nullspace of each operator by predicting the unobserved part. In practice, the expectation in~\cref{eq:def_split_loss} is estimated using a single split $\y_1$ for each training batch.

    \subsection{Equivariant imaging} \label{subsec:equivariant_imaging}

    Equivariant imaging \changed{(EI)} relies on the assumption that the distribution of images is invariant under a group of transformations $\tg \changed{\in \R^{n \times n}}$, $g \in \G$, to learn beyond the nullspace of measurements obtained via a \emph{single} operator ~\citep{chen21Equivariant,chen22Robust}. In this setting, the reconstruction model is expected to be able to estimate ground truth images $\x$ as well as their transformations $\tg \x$ in a coherent manner, i.e., such that the entire measurement-reconstruction pipeline $f(\A \x)$ is equivariant with respect to the transformations $f(\A\tg \x, \A) = \tg f(\A \x, \A)$. In order to achieve this, they propose a self-supervised loss which consists in a traditional measurement consistency term (replaced by SURE in the presence of noise), along with an equivariance-promoting term
    \begin{equation} \label{eq:def_ei_loss}
        \mathcal{L}_{\textrm{EI}}(\y, \A, f) = \| \A f(\y, \A) - \y  \|^2 + \lambda \mathbb{E}_{g}\{\| \tg f(\y, \A) - f(\A\tg f(\y, \A), \A) \|^2\},
    \end{equation}
    where $\lambda > 0$ is a trade-off coefficient. Even though it has been shown to be particularly effective on a wide variety of problems~\citep{wang24PerspectiveEquivariant,sechaud24Equivariancebased}, it typically requires from two to three evaluations of the neural network which makes it very computationally expensive~\citep{xu25Fast}. Moreover, the equivariant loss is only effective at enforcing equivariance when the learned estimator achieves almost perfect reconstructions,  $f(\y, \A) \approx \x$~\citep{chen21Equivariant}, which is not the case for very ill-conditioned problems and which leads to the method having multiple possible solutions in general.

    \section{Method} \label{sec:method}

    In this section, we present our method that combines measurement splitting and EI introduced in \Cref{sec:background}. In order to learn from incomplete measurements obtained via a single operator, we rely on the same assumption of EI:

    \begin{assumption} \label{ass:invariant_model}
        The distribution of ground truth images $p(\x)$ is invariant to the transformations $\{\tg\}_{g\in\mathcal{G}}$
        \begin{equation}\label{eq:def_invariant_model}
            p\left(\tg \x\right) = p(\x),\ \forall g \in \G, \forall \x \in \mathbb R^n.
        \end{equation}
    \end{assumption}

    The set of transformations
    is a design choice of the method to be chosen on a per-problem basis. \Cref{prop:necCondEq} helps to choose them for specific problems, and we use it in the experiments. \changed{This assumption applies in many different settings, natural image distributions are generally invariant to rigid transformations (translations, rotations, flips, scalings), especially microscopic, aerial or remote sensing images which have no privileged orientation. Moreover, our experiments show that our method performs well even in the presence of approximate invariance, e.g., for medical images.} \changed{Refer to~\Cref{sec:decision_aid} for more details about how to choose the transformations for a given application.}

    In~\Cref{subsec:proposed_loss}, we present our proposed loss and state its optimality under mild assumptions in~\Cref{th:isMMSEOptimal}~and~\Cref{prop:optimality}.
    In~\Cref{subsec:equiv_reconstructors}, we present a new definition of equivariance for reconstruction functions and state sufficient conditions for common architectures to be equivariant in~\Cref{th:isEquivRecon}.
    In \Cref{subsec:synergy}, we finally present a computational synergy between our loss and these equivariant architectures stated in~\Cref{thm:loss_reduces_to}.
    \changed{See~\Cref{sec:algorithms} for a description of the end-to-end algorithm, and \Cref{sec:proofs} for the detailed proofs.}

    \subsection{Proposed loss} \label{subsec:proposed_loss}

    Under~\Cref{ass:invariant_model}, the measurements can be understood in a different way than they are traditionally. Indeed, measurements $\y$ are generally thought of as being associated to the ground truth image $\x$ and the forward matrix $\A$, but they can equally be understood as being associated to the virtual ground truth image $\x_g = \tg^{-1} \x$ and the virtual forward matrix $\A_g = \A\tg$, with
    \begin{equation} \label{eq:ip_eq_virtual_ip}
        \y = \A\x + \noise = \A\tg \tg^{-1} \x + \noise = \A_g \x_g + \noise.
    \end{equation}
    The implicit multi-operator structure (with operators $\{\A\tg\}_{g\in\mathcal{G}}$) of the problem hints that we should be able to leverage the splitting approaches presented in~\Cref{sec:background}. Moreover, since we use the same invariance assumption as EI, we can combine the two approaches to obtain equivariant splitting (ES), a new self-supervised loss $\mathcal L_{\text{ES}}$ that has the advantages of both methods.

    \paragraph{Noiseless measurements}
    The ES self-supervised loss is expressed as
    \begin{align}
        \Ls_{\textrm{ES}} (\y, \A, f) & \triangleq \E_g \left\{\Ls_{\textrm{SPLIT}} (\y, \A\tg, f)\right\} \label{eq:def_loss_es} \\
        & =\E_g \left\{ \CondEsp{\| \A\tg f(\y_1, \A_1) - \y\|^2}{\y_1, \A_1}{\y, \A\tg}\right\}  \label{eq: loss ES}
    \end{align}
    where  $\A_1 \sim p(\A_1|\A\tg)\triangleq  p(\A_1|g)$ is a random splitting of $\A\tg$.

    \begin{restatable}{theorem}{theoremIsMMSEOptimal} \label{th:isMMSEOptimal}
        In the case of noiseless measurements with $p(\x)$ $\mathcal{G}$-invariant (\Cref{ass:invariant_model}), if the matrix $\Q_{\A_1} \triangleq \CondEsp{(\A\tg)^\top\A\tg}{g}{\A_1} \changed{\in \R^{n \times n}}$ has full rank for some split $\A_1$, then the splitting method yields the same MMSE-optimal reconstructions as the supervised method, i.e.,
        \begin{equation}
            f^{\ast}(\y_1, \A_1) = \CondEsp{\x}{\x}{\y_1, \A_1}.
        \end{equation}
    \end{restatable}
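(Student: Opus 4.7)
The plan is to expand $\Ls_{\textrm{ES}}$ via the tower property and minimize the resulting expected objective pointwise in the function value $\vv = f(\y_1, \A_1)$. First I would take the full expectation of $\Ls_{\textrm{ES}}$ over the data distribution, pull the conditional expectation on $(\y_1, \A_1)$ outside, and reduce the problem to finding, for each realization of $(\y_1, \A_1)$, the vector $\vv^\star$ minimizing
\[
J(\vv) \triangleq \mathbb{E}\!\left[\|\A\tg\vv - \y\|^2 \,\big|\, \y_1, \A_1\right].
\]

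Next I would exploit the virtual decomposition from \eqref{eq:ip_eq_virtual_ip}, writing $\y = \A\tg\,\x_g$ with $\x_g = \tg^{-1}\x$, to substitute $(\A\tg)^\top\y = (\A\tg)^\top(\A\tg)\x_g$ and expand the quadratic in $\vv$. The critical intermediate step is to argue that $\x_g$ is independent of $\A\tg$ conditionally on $\A_1$, and that this independence is preserved when further conditioning on $\y_1 = \A_1\x_g$. The first fact follows from \Cref{ass:invariant_model}: by invariance, the conditional law of $\x_g$ given $g$ coincides with $p(\x)$, and marginalizing over $g$ (using $\x \perp (\A, g)$) gives $\x_g \perp \A\tg \mid \A_1$; the second fact holds because $\y_1$ depends on $\A\tg$ only through $\A_1$. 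Together these yield
\[
\mathbb{E}\!\left[(\A\tg)^\top\A\tg \mid \y_1, \A_1\right] = \Q_{\A_1},\qquad \mathbb{E}\!\left[(\A\tg)^\top\A\tg\,\x_g \mid \y_1, \A_1\right] = \Q_{\A_1}\,\mathbb{E}\!\left[\x_g \mid \y_1, \A_1\right],
\]
so that $J(\vv) = \vv^\top\Q_{\A_1}\vv - 2\vv^\top\Q_{\A_1}\mathbb{E}[\x_g \mid \y_1, \A_1] + \mathrm{const}$.

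When $\Q_{\A_1}$ has full rank, $J$ is strictly convex with unique minimizer $\vv^\star = \mathbb{E}[\x_g \mid \y_1, \A_1]$. To conclude I would observe that the triple $(\x_g, \y_1, \A_1)$ produced by the self-supervised pipeline has the same joint law as $(\x, \y_1, \A_1)$ in the canonical supervised setup where $\x \sim p(\x)$ is drawn independently of $\A_1$ and $\y_1 = \A_1\x$; indeed, the same reasoning as above shows $\x_g \sim p(\x)$ and $\x_g \perp \A_1$. Hence $\mathbb{E}[\x_g \mid \y_1, \A_1]$ coincides with $\mathbb{E}[\x \mid \y_1, \A_1]$ in the theorem's notation, which is the MMSE estimator.

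The main obstacle will be establishing the two conditional independence statements cleanly, since three sources of randomness coexist ($\x$, $g$, and the splitting randomness) and the change of variables $\x \leftrightarrow \x_g$ is only globally valid because of \Cref{ass:invariant_model}. A secondary subtlety is the precise reading of \emph{``for some split $\A_1$''}: strict convexity of $J$ only requires $\Q_{\A_1}$ to be invertible on that part of the support, so the pointwise characterization of $f^\star$ is guaranteed there, and the minimum of $\Ls_{\textrm{ES}}$ is attained by any extension of the MMSE to the remaining splits.
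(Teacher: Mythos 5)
Your proposal is correct and follows essentially the same route as the paper's proof: rewrite the measurements via the virtual decomposition $\y = \A\tg\,\x_g$ under \Cref{ass:invariant_model}, average over $g$ to reduce the expected loss to a quadratic form weighted by $\Q_{\A_1}$, and use invertibility of $\Q_{\A_1}$ to conclude that the minimizer is the conditional mean. The only differences are cosmetic: you minimize pointwise in $\vv=f(\y_1,\A_1)$ rather than invoking the paper's \Cref{technical lemma} (which the paper also needs for \Cref{prop:optimality}), and you spell out the conditional-independence steps ($g \perp \x_g \mid \A_1,\y_1$ and the identification of $\E\{\x_g\mid\y_1,\A_1\}$ with the supervised MMSE) that the paper leaves implicit.
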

    \changed{While it is sufficient for the matrix $\Q_{\A_1}$ to be invertible for the reconstructions to be almost or exactly optimal, in practice the spectrum (number of its non-negligible eigenvalues) of the matrix determines how close to optimal they are.}
    \changed{In the experiments, we use a single (random) split per batch element and we average the reconstructions corresponding to 10 splits at inference.}

    \begin{restatable}{proposition}{proptesttime} \label{prop:optimality}
        If the matrix $\bar{\Q}_{\A} \triangleq \E_{\A_1|\A} \left\{\Q_{\A_1}\right\} \changed{\in \R^{n \times n}}$ is invertible and $f$ minimizes $\mathbb{E}_{\y} \{\Ls_{\textrm{ES}} (\y, \A, f) \}$.
        Then the reconstruction function
        \begin{equation}
            \overline{f}(\y, \A) \triangleq \E_{\y_1, \A_1\mid \y, \A} \left\{ \bar{\Q}_{\A}^{-1} \Q_{\A_1}f(\y_1, \A_1) \right\}
        \end{equation}
        satisfies
        \begin{equation} \label{eq: convex mmse}
            \overline{f}(\y, \A) = \CondEsp{ \bar{\Q}_{\A}^{-1} \Q_{\A_1} \CondEsp{\x}{\x}{\y_1, \A_1}}{\y_1, \A_1}{\y, \A}.
        \end{equation}
    where \cref{eq: convex mmse} is a convex combination of MMSE estimators for different splittings.
    \end{restatable}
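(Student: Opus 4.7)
I would characterise the minimiser $f$ of $\E_\y\{\Ls_{\textrm{ES}}(\y,\A,f)\}$ pointwise in $(\y_1,\A_1)$, and then evaluate $\overline f$ directly from its defining formula. Using the virtual-operator rewriting \cref{eq:ip_eq_virtual_ip}, set $\x_g \triangleq \tg^{-1}\x$ and $\y_1 \triangleq \A_1\x_g$. By \Cref{ass:invariant_model}, $\x_g$ has the same marginal law as $\x$ and is independent of $g$. Substituting into \cref{eq: loss ES}, the objective becomes
\[
\E_\y\{\Ls_{\textrm{ES}}(\y,\A,f)\} \;=\; \E_{\x_g,g,\A_1}\!\left\{\|\A\tg\, f(\y_1,\A_1) - \A\tg\,\x_g\|^2\right\},
\]
which is convex quadratic in the function values of $f$, so any minimiser is pinned down by the pointwise normal equation
\[
\E\!\left\{(\A\tg)^\top\A\tg\,\big|\,\y_1,\A_1\right\} f(\y_1,\A_1) \;=\; \E\!\left\{(\A\tg)^\top\A\tg\,\x_g\,\big|\,\y_1,\A_1\right\}.
\]

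Next, I would collapse both conditional expectations using two conditional independences that follow from $\x_g \perp g$ (invariance) together with $\A_1$ being $\sigma(g)$-measurable: namely, $g \perp \y_1 \mid \A_1$ and $g \perp \x_g \mid (\y_1,\A_1)$. The first reduces the left-hand coefficient to $\E\{(\A\tg)^\top\A\tg\mid\A_1\} = \Q_{\A_1}$, and the second lets me factor the right-hand side as $\Q_{\A_1}\,\CondEsp{\x_g}{\x_g}{\y_1,\A_1}$. Because $(\x,\A_1,\A_1\x)$ and $(\x_g,\A_1,\y_1)$ have the same joint distribution, $\CondEsp{\x_g}{\x_g}{\y_1,\A_1} = \CondEsp{\x}{\x}{\y_1,\A_1}$, and the normal equation becomes
\[
\Q_{\A_1}\, f(\y_1,\A_1) \;=\; \Q_{\A_1}\, \CondEsp{\x}{\x}{\y_1,\A_1}.
\]

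Finally, I would substitute this identity into the definition of $\overline f$, pulling the deterministic factor $\bar{\Q}_{\A}^{-1}$ through the outer expectation, yielding the claimed formula. The convex-combination interpretation then follows from $\E_{\y_1,\A_1\mid\y,\A}\{\bar{\Q}_{\A}^{-1}\Q_{\A_1}\} = \bar{\Q}_{\A}^{-1}\,\E_{\A_1\mid\A}\{\Q_{\A_1}\} = \Id$, which is the matrix analogue of nonnegative scalar weights summing to one. I expect the main obstacle to be the factorisation step above: the matrix $(\A\tg)^\top\A\tg$ and the vector $\x_g$ both depend on the ambient randomness, and cleanly separating them hinges on rigorously verifying the conditional-independence relations from the joint law $p(g)\,p(\A_1\mid g)\,p(\x_g)$ together with the deterministic constraint $\y_1 = \A_1\x_g$, which is precisely where \Cref{ass:invariant_model} (via $\x_g \perp g$) does the heavy lifting.
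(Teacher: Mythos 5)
Your proposal is correct and follows essentially the same route as the paper: you characterise the minimisers of the expected ES loss (your pointwise normal equation $\Q_{\A_1} f(\y_1,\A_1) = \Q_{\A_1}\CondEsp{\x}{\x}{\y_1,\A_1}$ is equivalent to the paper's explicit parametrisation in \cref{eq: minimizer split}, and in both arguments left-multiplication by $\Q_{\A_1}$ is what kills the nullspace ambiguity of the minimiser) and then substitute into the definition of $\overline{f}$. Your explicit verification of the conditional independences and of $\E_{\y_1,\A_1\mid\y,\A}\{\bar{\Q}_{\A}^{-1}\Q_{\A_1}\}=\Id$ only makes explicit steps the paper leaves implicit.
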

   In practice, often neither $\bar{\Q}_{\A}$ nor $ \Q_{\A_1}$ can be computed in closed-form, and we use a non-weighted average over random splittings
   \begin{equation}\label{eq: test time}
   \overline{f}(\y, \A) \coloneq \frac{1}{J}\sum\limits^J_{j=1} f(\y_1^{(j)}, \A_1^{(j)})\; \text{ with } \;(\y_1^{(j)}\A_1^{(j)})  \sim p(\y_1\A_1|\y,\A\tg)
   \end{equation} where $g$ is chosen randomly over the group of transformations for each split.

    As with EI, the forward operator should not be equivariant with respect to the choice of transformations, in order to learn beyond the nullspace of the operator:
    \begin{restatable}{corollary}{propNecessaryConditionEquivariance} \label{prop:necCondEq}
        In order for the matrices $\Q_{\A_1}$ or $\bar{\Q}_{\A}$ to have full rank, it is necessary that $\A$ is not equivariant:
        \begin{equation}
            \exists g \in \G, \A\T_g \neq \T_g\A.
        \end{equation}
    \end{restatable}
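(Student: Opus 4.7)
The plan is to prove the contrapositive: assume that $\A$ is equivariant in the sense that $\A\T_g = \T_g \A$ for every $g\in \G$ (where the $\T_g$ on the right is understood to be the corresponding representation acting on the measurement space $\mathbb R^m$), and deduce that both $\Q_{\A_1}$ and $\bar{\Q}_{\A}$ fail to be of full rank. The essential observation is that, since the group elements $\T_g$ are meant to be norm-preserving (orthogonal) transformations, equivariance collapses every term of the form $(\A\T_g)^\top (\A\T_g)$ back to the single matrix $\A^\top \A$.

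Concretely, I would proceed in three steps. First, I would note that for an orthogonal $\T_g$ on $\mathbb R^m$,
\begin{equation*}
    (\A\T_g)^\top (\A\T_g) \;=\; (\T_g\A)^\top(\T_g\A) \;=\; \A^\top \T_g^\top \T_g \A \;=\; \A^\top \A,
\end{equation*}
so the integrand inside $\Q_{\A_1}$ does not actually depend on $g$. Second, I would substitute this into the definitions to obtain
\begin{equation*}
    \Q_{\A_1} \;=\; \CondEsp{(\A\T_g)^\top \A\T_g}{g}{\A_1} \;=\; \A^\top \A,\qquad \bar{\Q}_{\A} \;=\; \mathbb{E}_{\A_1\mid \A}\{\Q_{\A_1}\} \;=\; \A^\top \A.
\end{equation*}
Third, since $\A\in\mathbb R^{m\times n}$ is by assumption an incomplete forward operator (rank $r<n$, the regime motivating the entire paper), the Gram matrix $\A^\top \A$ inherits the same rank $r<n$ and hence is singular. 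This contradicts the full-rank hypothesis on either $\Q_{\A_1}$ or $\bar{\Q}_{\A}$ and thereby establishes the contrapositive.

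The only real subtlety, and what I anticipate as the main obstacle, is notational/foundational: the symbol $\T_g$ is being used for two different representations of the same group element — one on $\mathbb R^n$ (inside $\A\T_g$) and one on $\mathbb R^m$ (inside $\T_g\A$) — and the step $(\A\T_g)^\top(\A\T_g) = \A^\top\A$ depends on the measurement-space representation being orthogonal. I would therefore state up front that the group acts by orthogonal transformations on both spaces (the standing assumption throughout the equivariant-imaging literature the paper builds on), so that $\T_g^\top \T_g = \Id$ and the simplification above is justified. Once that is pinned down, the argument is a one-line computation followed by the rank count, with no further difficulty.
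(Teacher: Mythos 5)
Your proof is correct, but it takes a somewhat different route from the paper's. The paper also works from the hypothesis $\A\T_g = \T_g\A$ for all $g\in\G$, but instead of computing $\Q_{\A_1}$ explicitly it shows a kernel inclusion: for any $\x\in\ker(\A)$, each term satisfies $(\A\T_g)^\top\A\T_g\x = (\A\T_g)^\top\T_g\A\x = \boldsymbol{0}$, hence $\ker(\Q_{\A_1})\supseteq\ker(\A)\supsetneq\{\boldsymbol{0}\}$, and $\bar{\Q}_{\A}$ inherits the same nontrivial nullspace. That argument needs nothing from the measurement-side action beyond linearity ($\T_g\boldsymbol{0}=\boldsymbol{0}$), whereas your collapse $(\A\T_g)^\top(\A\T_g)=\A^\top\T_g^\top\T_g\A=\A^\top\A$ requires the measurement-space representation to be orthogonal --- an assumption you correctly flag, and which is consistent with the paper (unitarity of $\T_g$ is invoked elsewhere, e.g.\ in the MMSE proof), but which the paper's kernel argument manages to avoid. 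In exchange, your computation is sharper: under equivariance you identify $\Q_{\A_1}=\bar{\Q}_{\A}=\A^\top\A$ exactly, rather than merely exhibiting a nontrivial nullspace. Both arguments rest on the same standing hypothesis that $\A$ is rank-deficient, i.e.\ $\ker(\A)\neq\{\boldsymbol{0}\}$; without it the statement fails (e.g.\ $\A=\Id$ commutes with every $\T_g$ yet gives full-rank $\Q_{\A_1}$), and you make this reliance explicit where the paper leaves it implicit in the step $\ker(\A)\supsetneq\{\boldsymbol{0}\}$.
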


    \paragraph{Noisy measurements}
    The ES loss can be split into two separate terms, one enforcing measurement consistency, and the other prediction accuracy:
    \begin{equation*}
        \Ls_{\textrm{ES}} (\y, \A, f)
        = \E_g \left\{ \CondEsp{\| \A_1 f(\y_1, \A_1) - \y_1\|^2 + \| \A_2 f(\y_1, \A_1) - \y_2\|^2}{\y_1, \A_1}{\y, \A\tg}\right\},
    \end{equation*}
    where $\A_1$ and $\A_2$ are a splitting of $\A\tg$.
    If the measurements are noisy, the first term can be replaced by a self-supervised denoising loss. In particular, if measurements are corrupted by Gaussian noise of standard deviation $\sigma$, we replace the first term by the R2R loss, yielding:
    \begin{align*}
        \Ls_{\textrm{G-ES}} (\y, \A, f)  = \E_{g, \y_1, \A_1, \boldsymbol{\omega} \mid \y, \A\tg}  \Bigg\{ & \left\| \A_1 f(\y_1 + \alpha \boldsymbol{\omega}, \A_1) - \Big(\y_1 - \frac{\boldsymbol{\omega}}{\alpha}\Big)\right\|^2 \\
         & +  \| \A_2 f(\y_1 + \alpha \boldsymbol{\omega}, \A_1) - \y_2\|^2 \Bigg\}
    \end{align*}

    with $\boldsymbol{\omega}\sim\mathcal{N}(\boldsymbol{0}, \sigma^2\Id)$ and a hyper-parameter $\alpha \in (0,+\infty)$. Since R2R provides an unbiased estimate of the clean measurement consistency term~\citep{pang21RecorruptedtoRecorrupted}, we can apply~\Cref{th:isMMSEOptimal} to show that minimizing this loss (in expectation) also results in MMSE estimators (if the conditions on $\Q_{\A_1}$ or $\bar{\Q}_{\A}$ are verified). In the case of non-Gaussian noise, the R2R loss can be replaced by its non-Gaussian extension~\citep{monroy24Generalized}. As with random splits, the expectation over $\boldsymbol{\omega}$ is computed using a random realization per batch. At test time, we modify~\cref{eq: test time} to average over both splits and synthetic noise additions.

    \subsection{Equivariant reconstructors} \label{subsec:equiv_reconstructors}

    The ES loss requires a model evaluation for every mask and transformation. We show that, instead of sampling a random transformation each evaluation, imposing architectural equivariance constraints removes the need to explicitly compute the transforms.

    Image-to-image functions $\phi(\x)$ are equivariant if they satisfy \citep{cohen16Group}
    \begin{equation} \label{eq:def_equivariant_denoiser}
        \phi(\mathop{\tg} \x) = \mathop{\tg} \phi(\x),\ \forall \x \in \mathbb R^n,\ \forall g \in \mathcal G.
    \end{equation}
    In this work, we introduce an extension of this definition to reconstruction functions $f(\y, \A)$. To the best of our knowledge, this is the first work that introduces this definition.

    \begin{definition} \label{def: equi}
        We say that the reconstruction function $f(\y, \A)$ is an equivariant reconstructor if
        \begin{equation} \label{eq:def_equiv_recon}
            f(\y,\A \tg) = \tg^{-1} f(\y,\A),\ \forall \y \in \mathbb R^m, \forall g \in \G, \forall \A \in \mathbb R^{m \times n}.
        \end{equation}
    \end{definition}

    This property is very general and the class of classical reconstruction functions that satisfy it is large.

    \begin{restatable}{theorem}{theoremIsEquivariantReconstructor} \label{th:isEquivRecon}
        The reconstruction functions defined in points below are all equivariant as in~\cref{eq:def_equiv_recon}.
        \begin{enumerate}
            \item \textbf{Artifact removal network.} For a denoiser $\phi(\x)$ equivariant in the sense of \cref{eq:def_equivariant_denoiser},
            \begin{equation} \label{eq:def_equiv_recon_artifact_removal}
                \text{$f(\y, \A) = \phi\left(\A^\top \y\right)\;$,  or $\;f(\y, \A) = \phi\left( \A^\dagger \y \right)$.}
            \end{equation}
            \item \textbf{Unrolled network.}
            For $\phi(\x)$ equivariant, any $\gamma \in \mathbb{R}$ and data fidelity $d(\A\x, \y)$, with
            \begin{equation}
                \x_0 = \boldsymbol{0}, \quad \x_{k+1} = \phi\big(\x_k - \gamma \nabla_{\x_k} d(\A\x_k, \y)\big)
            \end{equation}
            for $k=0,\dots,L-1$ and $f(\y, \A)=\x_{L}$.
            \item \textbf{Reynolds averaging.} For a possibly non-equivariant reconstructor $r(\y, \A)$, with
            \begin{equation} \label{eq:def_equiv_recon_reynolds}
                f(\y, \A) = \frac 1 {\cardG} \sum_{g \in \G} \tg \fr(\y, \A \tg).
            \end{equation}
            \item
            \textbf{Maximum a posteriori (MAP).}
            For a distribution $p(\x)$ invariant as in \cref{eq:def_invariant_model}, with
            \begin{equation} \label{eq:def_equiv_recon_map}
                f(\y, \A) = \mathop{\mathrm{argmax}}_{\x \in \mathbb R^n} \ \Big\{ p(\x \mid \y, \A) \Big\}.
            \end{equation}
            \item \textbf{Minimum mean squared error (MMSE).} For a distribution $p(\x)$ invariant as in~\cref{eq:def_invariant_model},
            \begin{equation} \label{eq:def_equiv_recon_mmse}
                f(\y, \A) = \CondEsp{\x}{\x}{\y, \A}.
            \end{equation}
        \end{enumerate}
    \end{restatable}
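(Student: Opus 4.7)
The plan is to handle each of the five cases separately. Throughout I would rely on two standing facts: that the representation $\{\tg\}_{g\in\G}$ is orthogonal (so $\tg^{\top}=\tg^{-1}$ and $|\det \tg|=1$), and that it is a group homomorphism (so $\T_{gh}=\tg\T_h$ and, in particular, $(\A\tg)^{\dagger}=\tg^{-1}\A^{\dagger}$ and $(\A\tg)^{\top}=\tg^{-1}\A^{\top}$).

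The two architectural cases (1 and 2) are pure algebra. For the artifact removal case, I write $f(\y, \A\tg)=\phi((\A\tg)^{\top}\y)=\phi(\tg^{-1}\A^{\top}\y)$ and apply equivariance of $\phi$ (\cref{eq:def_equivariant_denoiser}) to pull $\tg^{-1}$ outside; the pseudoinverse variant uses the orthogonal-conjugation identity for $(\A\tg)^{\dagger}$. For the unrolled network, let $\x_k'$ denote the iterates built with $\A\tg$. I would prove $\x_k'=\tg^{-1}\x_k$ by induction. The base case is immediate since $\T_g^{-1}\vzero=\vzero$. For the step, use chain rule and $\tg^{\top}=\tg^{-1}$ to get $\nabla_{\x}\,d(\A\tg\x,\y)|_{\x=\tg^{-1}\x_k}=\tg^{-1}\nabla_{\x}\,d(\A\x,\y)|_{\x=\x_k}$, so the gradient-descent inner update becomes $\tg^{-1}(\x_k-\gamma\nabla_{\x_k}d(\A\x_k,\y))$, and a final application of equivariance of $\phi$ yields $\x_{k+1}'=\tg^{-1}\x_{k+1}$.

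For Reynolds averaging (case 3) the argument is a re-indexing. Starting from $f(\y,\A\tg)=\frac{1}{\cardG}\sum_h \T_h\,r(\y,\A\tg\T_h)$, I perform the substitution $h'=gh$. Using the homomorphism property, both $\T_h=\tg^{-1}\T_{h'}$ and $\A\tg\T_h=\A\T_{h'}$, so the sum collapses to $\tg^{-1}\frac{1}{\cardG}\sum_{h'}\T_{h'}\,r(\y,\A\T_{h'})=\tg^{-1}f(\y,\A)$.

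The statistical cases (4 and 5) both reduce to a change-of-variables in Bayes' rule. Writing the posterior as $p(\x\mid\y,\A\tg)\propto p(\y\mid\A\tg\x)\,p(\x)$ and setting $\tilde\x=\tg\x$, I use $p(\x)=p(\tg^{-1}\tilde\x)=p(\tilde\x)$ (Assumption \ref{ass:invariant_model}) and $p(\y\mid\A\tg\x)=p(\y\mid\A\tilde\x)$ to show that the pushforward of $p(\,\cdot\mid\y,\A\tg)$ under $\tg$ equals $p(\,\cdot\mid\y,\A)$; the normalizer agrees because $p(\y\mid\A\tg)=\int p(\y\mid\A\tilde\x)p(\tilde\x)\,d\tilde\x=p(\y\mid\A)$ after the same substitution with unit Jacobian. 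For the MAP case this yields $\argmax_{\x} p(\x\mid\y,\A\tg)=\tg^{-1}\argmax_{\tilde\x} p(\tilde\x\mid\y,\A)$; for the MMSE case, $\int \x\,p(\x\mid\y,\A\tg)\,d\x=\int \tg^{-1}\tilde\x\,p(\tilde\x\mid\y,\A)\,d\tilde\x=\tg^{-1}\CondEsp{\x}{\x}{\y,\A}$.

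The main obstacle is largely bookkeeping: the unrolled case requires care that the gradient of the composed data-fidelity transforms covariantly under orthogonal $\tg$, and cases 4--5 require a clean handling of conditional densities to ensure that all normalizing constants and Jacobians are accounted for under the change of variables. Given orthogonality of $\tg$ is a standing structural assumption, none of these steps should present serious difficulty.
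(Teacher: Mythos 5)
Your proposal is correct and, for cases 1--3, follows essentially the same route as the paper: the identity $(\A\tg)^{\times}=\tg^{-1}\A^{\times}$ (for $\A^{\times}=\A^{\top}$ or $\A^{\dagger}$) combined with equivariance of $\phi$ for the artifact-removal network, an induction showing $\x_k(\y,\A\tg)=\tg^{-1}\x_k(\y,\A)$ via the chain rule and orthogonality of $\tg$ for the unrolled network, and the re-indexing $h'=gh$ for Reynolds averaging. For the statistical cases your route differs mildly, and in a useful way: the paper handles MAP in negative-log (variational) form, using invariance of $\rho(\x)=-\log p(\x)$, and handles MMSE by working with joint densities, which forces it to introduce hypotheses not present in the theorem statement, namely invariance of $p(\A)$ and independence of $\x$ and $\A$, in order to establish $p(\y,\A\tg)=p(\y,\A)$. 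You instead treat $\A$ as a conditioning parameter throughout and show that the pushforward of $p(\,\cdot\mid\y,\A\tg)$ under $\tg$ equals $p(\,\cdot\mid\y,\A)$, with the normalizer handled by $p(\y\mid\A\tg)=p(\y\mid\A)$ via the same unit-Jacobian substitution; this covers MAP and MMSE uniformly and avoids the auxiliary assumption on $p(\A)$, at the cost of implicitly assuming the prior on $\x$ does not depend on $\A$ (which is also implicit in the paper's setup). Both arguments ultimately rest on the two structural facts you state up front: orthogonality of $\tg$ and the group homomorphism property.
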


    For additional motivation and details about these reconstructor architectures, see \Cref{sec:method_supp}.

    \subsection{Efficient loss evaluation with equivariant reconstructors} \label{subsec:synergy}

    For equivariant reconstructors, the ES loss in~\cref{eq:def_loss_es} reduces to the splitting loss in~\cref{eq:def_split_loss}.

    \begin{restatable}{theorem}{theoremLossReducesTo} \label{thm:loss_reduces_to}
    If $f(\y, \A)$ is an equivariant reconstructor, then ES is equivalent to the splitting loss
    \begin{equation} \label{eq:loss_reduces_to}
        \mathcal L_{\mathrm{ES}}(\y, \A, f) = \mathcal L_{\mathrm{SPLIT}}(\y, \A, f).
    \end{equation}
    \end{restatable}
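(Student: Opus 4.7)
}
The plan is to use Definition~\ref{def: equi} to eliminate the transformation $\tg$ from inside $f$, and then show that the induced change of variables in the splitting distribution reduces the outer expectation over $g$ to a trivial one.

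First, I would unpack the ES loss. In the standard measurement splitting framework, $\A_1$ is obtained from $\A\tg$ by selecting a random subset of rows, i.e., $\A_1 = \boldsymbol{M}\A\tg$ for some random row-selection matrix $\boldsymbol{M}$, and analogously $\y_1 = \boldsymbol{M}\y$. Setting $\A_1' \triangleq \boldsymbol{M}\A$, we get the identity $\A_1 = \A_1'\tg$, while $\y_1$ is unchanged (it depends only on $\boldsymbol{M}$ and $\y$). This is exactly why the splitting distribution was written in the form $p(\A_1\mid\A\tg) = p(\A_1\mid g)$ earlier in the paper: the randomness comes from $\boldsymbol{M}$, not from $g$.

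Second, I would apply equivariance of $f$ (Definition~\ref{def: equi}) to the factor $\A_1 = \A_1'\tg$:
\begin{equation*}
    f(\y_1, \A_1) \;=\; f(\y_1, \A_1'\tg) \;=\; \tg^{-1} f(\y_1, \A_1').
\end{equation*}
Substituting into the integrand of the ES loss gives
\begin{equation*}
    \A\tg f(\y_1, \A_1) - \y \;=\; \A\tg\,\tg^{-1} f(\y_1, \A_1') - \y \;=\; \A f(\y_1, \A_1') - \y,
\end{equation*}
so the expression inside the squared norm no longer depends on $g$.

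Third, I would push through the change of variables $\A_1\mapsto\A_1' = \A_1\tg^{-1}$ in the inner conditional expectation. Because the splitting is determined by the mask $\boldsymbol{M}$ alone and $(\y_1,\A_1')$ depends on $(\y,\A)$ through $\boldsymbol{M}$ in exactly the same way that $(\y_1,\A_1)$ depends on $(\y,\A\tg)$, the pushforward of $p(\y_1,\A_1\mid\y,\A\tg)$ under this map is $p(\y_1,\A_1'\mid\y,\A)$. Therefore
\begin{equation*}
    \CondEsp{\|\A\tg f(\y_1,\A_1) - \y\|^2}{\y_1,\A_1}{\y,\A\tg}
    \;=\; \CondEsp{\|\A f(\y_1,\A_1') - \y\|^2}{\y_1,\A_1'}{\y,\A}
    \;=\; \Ls_{\textrm{SPLIT}}(\y,\A,f).
\end{equation*}
Since the right-hand side is independent of $g$, the outer $\E_g$ is trivial and we conclude $\Ls_{\textrm{ES}}(\y,\A,f) = \Ls_{\textrm{SPLIT}}(\y,\A,f)$.

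The main obstacle is the third step: making precise that the random splitting mechanism is ``equivariant'' in the sense that splitting $\A\tg$ coincides, after right-multiplication by $\tg^{-1}$, with splitting $\A$. This is the content of the notational shorthand $p(\A_1\mid\A\tg) = p(\A_1\mid g)$ but it requires that $\boldsymbol{M}$ be drawn independently of the operator, which is how row-based splittings (the standard case covered by the paper's experiments) are defined. Once this compatibility is spelled out, the rest of the argument is a one-line application of equivariance.
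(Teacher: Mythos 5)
Your proof is correct and follows essentially the same route as the paper's: write $\A_1 = \M\A\tg$ for a splitting matrix $\M$, use the equivariance $f(\y_1,\A_1'\tg)=\tg^{-1}f(\y_1,\A_1')$ to cancel $\tg$ inside the residual, and observe that the remaining expression no longer depends on $g$, so the outer expectation collapses to the splitting loss. Your explicit remark that the mask $\M$ must be drawn independently of the operator (the content of the shorthand $p(\A_1\mid\A\tg)=p(\A_1\mid g)$) is precisely the step the paper handles implicitly when it conditions on $\M$ given $(\y,g)$ and then drops the expectation over $g$.
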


    We emphasize that the condition for a reconstructor to be equivariant is different from the condition enforced by the EI loss, i.e., $f(\A \tg \x,\A) = \tg f(\A \x,\A)$. They are only equivalent if the reconstruction function is a perfect one-to-one mapping over all possible images.

    In our experiments, we build equivariant reconstructors using i) artifact removal networks with a translation equivariant  UNet denoiser~\citet{chaman21TrulyEquivariant} and ii) unrolled networks with a denoiser architecture equivariant to rotations and flips via averaging~\citep{sannai21Equivariant}.

    \section{Experiments} \label{sec:experiments}

    We assess the effectiveness of the proposed self-supervised loss using experiments conducted on different inverse problems. For each experiment, we train a model corresponding to our method as well as baseline methods and compare their performance. \rechanged{The inverse problems we consider are 1) inpainting, 2) compressive sensing, 3) accelerated MRI and 4) sparse-view CT.} We also validate our theoretical predictions by testing the effect of using an equivariant architecture.
    \rechanged{In~\Cref{subsec:compressed_sensing} we detail our experiments on compressive sensing, in~\Cref{subsec:inpainting} on image inpainting, in~\Cref{subsec:mri} on accelerated MRI and in~\Cref{subsec:ct} on sparse-view CT, and in~\Cref{subsec:ablation_study} we present an ablation study on the effect of equivariant architectures.} For additional details about the experiments, see~\Cref{sec:experiments_supp}.

    In each experiment, we compare against multiple baselines: a supervised baseline using the supervised loss described in~\cref{eq:def_supervised_loss}, the EI~\changed{\citep{chen21Equivariant}} baseline described in~\cref{eq:def_ei_loss}, a measurement consistency baseline~(MC and SURE)~\citet{eldar09Generalized} and a learning-free baseline which are either the measurements directly, or their image under the adjoint or the pseudo-inverse of the forward operator.
    We use the same architecture and the same training procedure for every method to ensure a fair comparison.
    We report the peak signal-to-noise ratio~(PSNR) and the structural similarity index measure~(SSIM)~\citep{wang04Image} of the final reconstructions for the different methods. They are distortion metrics indicating how close the reconstructions are to the ground truth images. We do not include perception metrics which are known to be at odds with them~\citep{blau18PerceptionDistortion}. In each case, we also compute equivariance metrics~(EQUIV) for translations or rotations and flips.

    We design a model architecture from the principles introduced in~\Cref{sec:method}, with a variant equivariant to shifts, one equivariant to rotations and flips, and one without equivariance.
    It uses an existing unrolled architecture~\citep{aggarwal19MoDL} with a prior step implemented as a standard UNet~\citep{ronneberger15UNet}, or that of~\citet{chaman21TrulyEquivariant} to enforce the equivariance to shifts. It also optionally uses Reynolds' averaging to enforce the equivariance to rotations and flips. \rechanged{We use a single equivariant variant dictated by~\Cref{prop:necCondEq} for each problem, the shift-equivariant one for compressive sensing and inpainting, and that equivariant to rotations and flips for MRI and CT.}

    \subsection{Compressive sensing} \label{subsec:compressed_sensing}

    \changed{The $28 \times 28$ ground truth images are obtained from the MNIST dataset and are measured without additional noise through compressive matrices $\A \in \mathbb R^{m \times n}$, with $m < n$ varying from training to training to assess the impact of the compression rate. These matrices are obtained by sampling $A_{i,j} \sim \mathcal N(0, 1 / m)$ for $i = 1, \ldots, m$ and $j = 1, \ldots, m$, where $n = 28 \times 28$.}
    \Cref{fig:compressive_sensing}~shows the performance of the different methods for the different compression rates. Our method performs almost as well as the supervised baseline, while the equivariant imaging baseline performs close to the supervised baseline only for higher compression rates.

    \begin{figure}[htbp]
        \centering
        \pgfplotsset{
  every axis/.append style={
    label style={scale=0.8},
    tick label style={scale=0.8},
    legend style={scale=0.8},
  }
}

\begin{tikzpicture}
    \begin{axis}[
        width=12cm,
        height=3.5cm,
        grid=both,
        x={(-0.2cm, 0cm)},
        xtick={10, 20, 30, 40, 50},
        xticklabels={90, 80, 70, 60, 50},
        xlabel={Compression level (\%)},
        ylabel={PSNR (dB)},
        ytick={10, 20, 30, 40},
        yticklabels={10, 20, 30, 40},
        ylabel style={yshift=-0.15cm},
        legend style={at={(1.02,0.0)},anchor=south west},
        legend cell align={left},
        grid style={line width=.1pt, draw=gray!30},
        major grid style={line width=.2pt,draw=gray!50},
        minor tick num=1,
        thick
    ]

        \addplot[
            color=black,
            mark=x,
            mark size=2.5pt,
            line width=1pt
        ] coordinates {
            (10.0, 25.2847900390625)
            (20.0, 30.28490966796875)
            (30.0, 34.11861328125)
            (40.0, 37.6652099609375)
            (50.0, 41.28447265625)

        };
        \addlegendentry{Supervised}

        \addplot[
            color=orange,
            mark=*,
            mark size=1pt,
            line width=1pt
        ] coordinates {
            (10.0, 25.6813037109375)
            (20.0, 30.9136767578125)
            (30.0, 34.57111572265625)
            (40.0, 38.193095703125)
            (50.0, 41.7637939453125)
        };
        \addlegendentry{ES (Ours)}

         \addplot[
            color=rose_ens,
            mark=*,
            mark size=1pt,
            line width=1pt
        ] coordinates {
            (10.0, 19.727647705078127)
            (20.0, 27.54322265625)
            (30.0, 32.2607861328125)
            (40.0, 36.406787109375)
            (50.0, 39.9035009765625)
        };
        \addlegendentry{EI}

    \end{axis}

\end{tikzpicture}
        \caption{\textbf{Compressive sensing results.} ES (ours) performs similarly as the supervised baseline, unlike EI (baseline) whose performance gap widens with higher compression levels.
        }
        \label{fig:compressive_sensing}
    \end{figure}
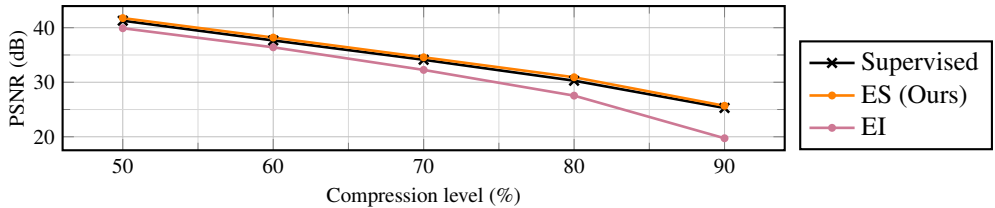

    \subsection{Image inpainting} \label{subsec:inpainting}

    The dataset consists of 128 \texttimes 128 images from DIV2K~\citep{agustsson17NTIRE} \changed{measured without additional noise through a single subsampling matrix $\A \in \mathbb R^{m \times n}$ selecting about 30\% of the pixels.}
    \changed{This matrix is obtained by sampling $a_1, \ldots, a_n \sim \mathcal B(0.3)$ where $n = 3 \times 128 \times 128$ and letting $A_{i,j} = \delta_{j_i,j}$ for $i = 1, \ldots, m$ and $j = 1, \ldots, n$, where $m \approx 0.3n$ is the number of nonzero values in $a$, and $j_i$ is the $i$-th index in $a$ corresponding to a nonzero value.}
    Among the 900 images in the dataset, 800 are used for training while the remaining 100 are used for testing. For the supervised method, we use different crops at each evaluation. \Cref{tab:Inpainting} and \Cref{fig:inpainting}~show that ES performs almost as well as the supervised baseline, and better than EI.

    \begin{figure}[bthp]
        \centering
        \input{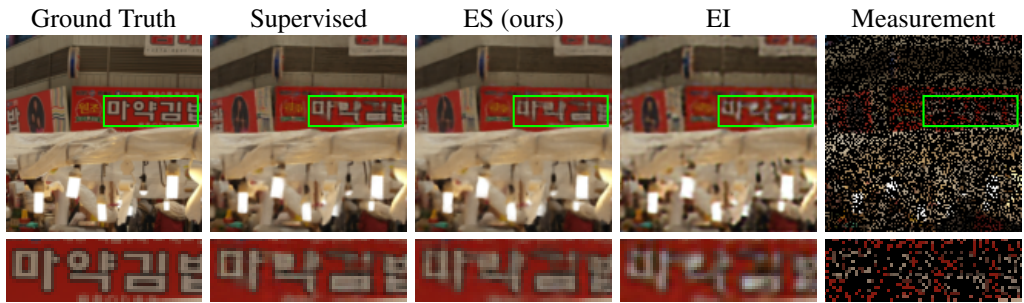}
        \caption{\textbf{Sample reconstructions for image inpainting.} ES (ours) produces images perceptually closer to the supervised baseline than EI (baseline) which appears blurry.}
        \label{fig:inpainting}
    \end{figure}

    \begin{table}[bthp]
        \caption{\textbf{Inpainting results.} ES (ours) performs better than EI (baseline), both in terms of reconstruction quality (PSNR, SSIM) and measured equivariance (EQUIV), while performing competitively against the supervised baseline. In \textbf{bold}, the best self-supervised metrics (avg $\pm$ st.d.).
        }
        \label{tab:Inpainting}

        \centering
    	\setlength{\tabcolsep}{19pt}
        \begin{tabular}{lccc}
            \toprule
            Method & PSNR $\uparrow$ & SSIM $\uparrow$ & EQUIV $\uparrow$ \\
            \midrule
            Supervised & 28.46 ± 2.97 & 0.8982 ± 0.0411 & 28.46 ± 2.97 \\
            \midrule
            ES (Ours) & \textbf{27.45 ± 2.86} & \textbf{0.8737 ± 0.0461} & \textbf{27.46 ± 2.85} \\
            EI & 25.89 ± 2.65 & 0.8332 ± 0.0521 & 25.89 ± 2.65 \\
            MC  & \ 8.22 ± 2.47   & 0.0983 ± 0.0551 & \ 8.22 ± 2.47 \\
            \midrule
            Incomplete image & \ 8.22 ± 2.47 & 0.0973 ± 0.0542 & N/A \\
            \bottomrule
        \end{tabular}
    \end{table}

    \subsection{Magnetic resonance imaging} \label{subsec:mri}

    The dataset contains 320 \texttimes 320 images from FastMRI~\citep{zbontar19FastMRI} subsampled in the Fourier domain by a single binary mask corresponding to an acceleration of 8, as well as Gaussian noise with a standard deviation of 0.005 corresponding to a signal-to-noise ratio (SNR) of 40 dB.
    \changed{Mathematically, the forward operator $\A \in \R^{m \times n}$ is expressed as $\A = \mathbf{M}\mathbf{F}$ where $\mathbf{F} \in \mathbb R^{n \times n}$ denotes the $n \times n$ discrete Fourier transform matrix and where $\mathbf{M} \in \mathbb R^{m \times n}$ is the subsampling mask defined as $M_{i,j} = \delta_{j_i,j}$ for $i = 1, \ldots, m$ and $j = 1, \ldots, n$, where $j_i$ denotes the $i$-th component in $\mathbb R^n$ corresponding to a pixel in one of the subsampled vertical lines in a random Gaussian mask.}
    Out of the 973 images in the full dataset, 900 are selected for training and the remaining 73 are used for testing. We also test our method on a noise dominated setting using a different mask corresponding to an acceleration of 6, with a higher noise level of 0.1 corresponding to an SNR of only 10 dB, see these results in~\Cref{subsec:additional_results}. The variant of our proposed loss we use in this experiment is the R2R one introduced in~\Cref{sec:method} with $\alpha=0.5$. \Cref{tab:MRI}~shows the performance of the different methods on the test set. \Cref{tab:ablation_equivariance}~shows the synergy of our method with the equivariant architecture. \Cref{fig:mri_recons}~shows sample reconstructions from the trained models.

    { \rechangedscoped
    We also evaluate our method on real MRI measurements from FastMRI instead of synthetic measurements.
    We normalize the k-space data that are sampled on different grids for different scans by resampling them using aliasing-free sinc interpolation on the $320 \times 320$ sampling grid of the ground truth scans. The resulting k-space data are subsampled using the same $\times 8$ acceleration mask used in the setting with synthetic measurements. \Cref{tab:MRI} shows that our method performs competitively.
    }
    \begin{figure}[bthp]
        \centering
        \input{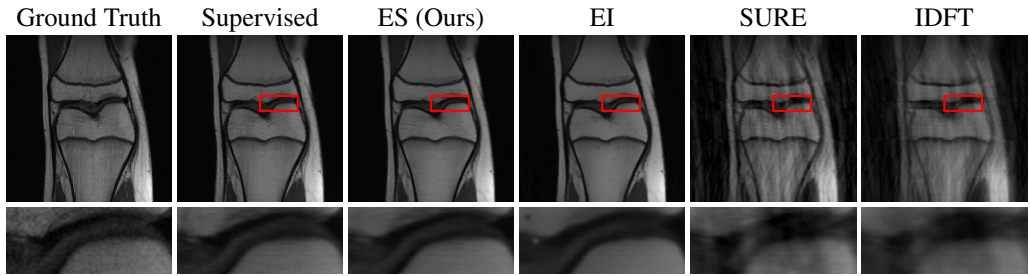}
        \caption{\textbf{Sample reconstructions for MRI (\texttimes 8 Accel., 40 dB SNR).} Unlike EI (baseline) which suffers from dot-shaped artifacts, ES (ours) is perceptually closer to the supervised baseline. In line with the theoretical predictions, SURE and IDFT (baselines) fail to recover information beyond the observed frequencies.}
        \label{fig:mri_recons}
    \end{figure}

    { \rechangedscoped
    \subsection{Computed tomography} \label{subsec:ct}

    The dataset consists in pairs of ground truth CT scans and corresponding sinograms.
    In order to obtain the CT scans, we resize to 256 \texttimes 256 pixels and clip between -1,000 and 1,000 HUs the ground truth scans from the LIDC-IDRI dataset~\citep{armatoiii11Lung}. For each scan, we compute the corresponding sinogram using a discrete Radon transform with 50 views and additive white Gaussian noise with a standard deviation of $0.001$ corresponding to a SNR of about 50 dB. The resulting 1,010 pairs are further split into 900 training pairs and 110 test pairs. \Cref{tab:MRI} shows that ES performs almost as well as the supervised baseline, and better than EI.
    }
    \begin{table}[bthp]
        \rechangedscoped

        \vspace{-12pt}
        \caption{\textbf{Medical imaging results.} ES (ours) performs better than EI, SURE and MC (baselines), while performing almost as well as the supervised baseline in reconstruction quality (PSNR, SSIM) and measured equivariance (EQUIV). In \textbf{bold}, the best self-supervised metrics (avg $\pm$ st.d.).}
        \label{tab:MRI}
        \centering
    	\setlength{\tabcolsep}{22pt}

        \rechanged{
        \begin{tabular}{lccc}
        \toprule
        & \multicolumn{3}{c}{MRI (\texttimes 8 Accel., 40 dB SNR)} \\
        \cmidrule(l){2-4}
        Method & PSNR $\uparrow$ & SSIM $\uparrow$ & EQUIV $\uparrow$ \\
        \midrule
        Supervised   & 28.74 ± 2.81 & 0.6445 ± 0.1094 & 31.71 ± 2.83 \\
        \midrule
        ES (Ours)    & \textbf{28.54 ± 2.75} & \textbf{0.6195 ± 0.1188} & \textbf{31.53 ± 2.74} \\
        EI           & 27.88 ± 2.64 & 0.5731 ± 0.1299 & 30.79 ± 2.64 \\
        SURE         & 24.45 ± 1.86 & 0.5479 ± 0.0740 & 27.35 ± 1.90 \\
        \midrule
        IDFT         & 23.62 ± 1.90 & 0.5052 ± 0.0900 & 25.99 ± 1.94 \\
        \midrule
        & \multicolumn{3}{c}{Real MRI measurements (\texttimes 8 Accel.)} \\
        \cmidrule(l){2-4}
        Method & PSNR $\uparrow$ & SSIM $\uparrow$ & EQUIV $\uparrow$ \\
        \midrule
        Supervised   & 28.81 ± 2.85 & 0.6480 ± 0.1103 & 31.81 ± 2.84 \\
        \midrule
        ES (Ours)     & \textbf{28.30 ± 2.64} & \textbf{0.6151 ± 0.1179} & \textbf{31.29 ± 2.62} \\
        EI           & 27.88 ± 2.61 & 0.5740 ± 0.1290 & 30.80 ± 2.61 \\
        MC          & 23.63 ± 1.90 & 0.5061 ± 0.0904 & 26.00 ± 1.94 \\
        \midrule
        IDFT         &  23.63 ± 1.90 & 0.5060 ± 0.0904 & 26.00 ± 1.94 \\
        \midrule
        & \multicolumn{3}{c}{CT (50 views, 50 dB SNR)} \\
        \cmidrule(l){2-4}
        Method & PSNR $\uparrow$ & SSIM $\uparrow$ & EQUIV $\uparrow$ \\
        \midrule
        Supervised  &  33.99 ± 2.48 & 0.8819 ± 0.0585 & 34.00 ± 2.49 \\
        \midrule
        ES (Ours)     & \textbf{32.62 ± 2.16} & \textbf{0.8570 ± 0.0596} & \textbf{32.60 ± 2.17} \\
        EI             &  28.61 ± 1.28 & 0.7400 ± 0.0466 & 28.61 ± 1.29 \\
        \midrule
        FBP          &  25.59 ± 0.69 & 0.4805 ± 0.0363 & 25.59 ± 0.70 \\
        \bottomrule
        \end{tabular}
        }
    \end{table}
    \subsection{Ablation study} \label{subsec:ablation_study}

    \Cref{tab:ablation_equivariance} shows that architectures designed to be equivariant are measurably more equivariant across imaging modalities and training losses. Moreover, it shows that networks trained using the splitting loss perform better for equivariant architectures than for non-equivariant architectures, with an even greater gap than for supervised inpainting baselines, which confirms the theoretical analysis made in~\Cref{sec:method}. Finally, we observe that non-equivariant architectures still lead to fairly measurably equivariant models. While surprising, this phenomenon has already been witnessed and is usually referred to as learned equivariance, whereby the training data and inductive biases lead to fairly equivariant learned models~\citep{gruver24Lie}. We believe that this learned equivariance is responsible for the high performance of splitting methods even when using non-equivariant architectures. For additional results on the impact of equivariant architectures, see~\Cref{subsec:additional_results}.

    \begin{table}[htbp]
    \vspace{-7pt}
        \caption{\textbf{Impact of using equivariant architectures.} In accordance with the theoretical results described in~\Cref{sec:method}, there is a synergy between the splitting loss and equivariant architectures resulting in higher performance. Non-equivariant models have surprisingly high equivariance measures (EQUIV) which might explain their high performance when using the splitting loss. Eq. arch. denotes whether the architecture is equivariant. In \textbf{bold}, the best self-supervised metrics (avg $\pm$ st.d.).}
        \label{tab:ablation_equivariance}
        \centering
    	\setlength{\tabcolsep}{13pt}
    	\begin{tabular}{lcccc}
        \toprule
        & & \multicolumn{3}{c}{Image inpainting} \\
        \cmidrule(l){3-5}
        Training loss & Eq. arch. & PSNR $\uparrow$ & SSIM $\uparrow$ & EQUIV $\uparrow$ \\
        \midrule
        Supervised & \checkmark & 28.46 ± 2.97 & 0.8982 ± 0.0411 & 28.46 ± 2.97 \\
        & \texttimes & 28.62 ± 3.03 & 0.9002 ± 0.0414 & 27.85 ± 2.71 \\
        \midrule
        Splitting (Ours) & \checkmark & \textbf{27.45 ± 2.86} & \textbf{0.8737 ± 0.0461} & \textbf{27.46 ± 2.85} \\
        & \texttimes & 27.20 ± 2.83 & 0.8652 ± 0.0461 & 26.52 ± 2.60 \\
        \midrule
        & & \multicolumn{3}{c}{MRI (\texttimes 8 Accel., 40 dB SNR)} \\
        \cmidrule(l){3-5}
        Training loss & Eq. arch. & PSNR $\uparrow$ & SSIM $\uparrow$ & EQUIV $\uparrow$ \\
        \midrule
        Supervised        & \checkmark & 28.74 ± 2.81 & 0.6445 ± 0.1094 & 31.71 ± 2.83 \\
                          & \texttimes & 28.48 ± 2.68 & 0.6381 ± 0.1082 & 28.78 ± 1.95 \\
        \midrule
        Splitting (Ours)  & \checkmark & \textbf{28.54 ± 2.75} & \textbf{0.6195 ± 0.1188} & \textbf{31.53 ± 2.74} \\
                          & \texttimes & 28.18 ± 2.58 & 0.6104 ± 0.1176 & 27.28 ± 2.10 \\
        \bottomrule
        \end{tabular}
    \end{table}

    \vspace{-6pt}
    \section{Conclusion}

    In this work, we propose a new self-supervised loss for solving inverse problems which
    bridges the gap between existing equivariance and splitting-based self-supervised losses. We motivate the design of our loss by showing that minimizing the expected loss results in MMSE estimators. \changed{We further validate our method using numerical simulations on different image distributions and imaging modalities, including inpainting of natural images, MRI and CT.} These results suggest that the proposed method compares favorably to the equivariant imaging baseline and is close to supervised methods. To the best of our knowledge, this work is the first to leverage equivariant networks to learn from incomplete data alone, going beyond the usual goal of improving the generalization of the networks to unseen transformations at test time.
    Our method provides a new way to evaluate the benefits of using different equivariant architectures, and can benefit from future advances made in this field.
    More broadly, our work is further evidence that invariance is a promising prior for learning from incomplete data.

    \section{Acknowledgements}
    Victor Sechaud and Julian Tachella are supported by the ANR grant UNLIP (ANR-23-CE23-0013).
    This project was provided with computing HPC and storage resources by GENCI at IDRIS thanks to the grant 2025-AD011016422 on the supercomputer Jean Zay’s H100 partition.
    Part of the computations were performed on the machine cluster at the Pascal Blaise Center at the ENS de Lyon~\citep{quemener2013sidus}.

    \bibliography{iclr2026_conference}
    \bibliographystyle{iclr2026_conference}

    \appendix

    \crefalias{subsection}{appendix}
    \crefalias{subsubsection}{appendix}

    \section{Details about the method} \label{sec:method_supp}

    \changed{\subsection{Equivariant reconstructor}}
    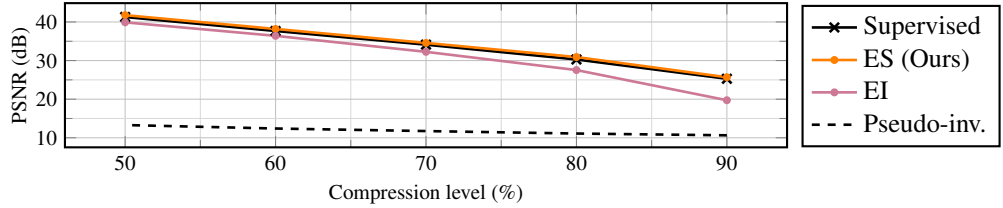
\begin{figure}[htbp]
        \centering
        \pgfplotsset{
  every axis/.append style={
    label style={scale=0.8},
    tick label style={scale=0.8},
    legend style={scale=0.8},
  }
}

\begin{tikzpicture}
    \begin{axis}[
        width=12cm,
        height=3.5cm,
        grid=both,
        x={(-0.2cm, 0cm)},
        xtick={10, 20, 30, 40, 50},
        xticklabels={90, 80, 70, 60, 50},
        xlabel={Compression level (\%)},
        ylabel={PSNR (dB)},
        ytick={10, 20, 30, 40},
        yticklabels={10, 20, 30, 40},
        ylabel style={yshift=-0.15cm},
        legend style={at={(1.02,0.0)},anchor=south west},
        legend cell align={left},
        grid style={line width=.1pt, draw=gray!30},
        major grid style={line width=.2pt,draw=gray!50},
        minor tick num=1,
        thick
    ]

        \addplot[
            color=black,
            mark=x,
            mark size=2.5pt,
            line width=1pt
        ] coordinates {
            (10.0, 25.2847900390625)
            (20.0, 30.28490966796875)
            (30.0, 34.11861328125)
            (40.0, 37.6652099609375)
            (50.0, 41.28447265625)

        };
        \addlegendentry{Supervised}

        \addplot[
            color=orange,
            mark=*,
            mark size=1pt,
            line width=1pt
        ] coordinates {
            (10.0, 25.6813037109375)
            (20.0, 30.9136767578125)
            (30.0, 34.57111572265625)
            (40.0, 38.193095703125)
            (50.0, 41.7637939453125)
        };
        \addlegendentry{ES (Ours)}

         \addplot[
            color=rose_ens,
            mark=*,
            mark size=1pt,
            line width=1pt
        ] coordinates {
            (10.0, 19.727647705078127)
            (20.0, 27.54322265625)
            (30.0, 32.2607861328125)
            (40.0, 36.406787109375)
            (50.0, 39.9035009765625)
        };
        \addlegendentry{EI}

        \addplot[
            color=black,
            mark size=1pt,
            line width=1pt,
            dashed
        ] coordinates {
            (10, 10.64)
            (20, 11.10)
            (30, 11.74)
            (40, 12.40)
            (50, 13.30)
        };
        \addlegendentry{Pseudo-inv.}

    \end{axis}

\end{tikzpicture}
        \caption{\textbf{Compressive sensing results.} Adds the pseudo-inverse reconstruction (pseudo-inv.) as a baseline to~\Cref{fig:compressive_sensing}
        }
        \label{fig:compressing_sensing_supp}
    \end{figure}

    \begin{figure}[htbp]
        \centering
        \includegraphics[width=\linewidth]{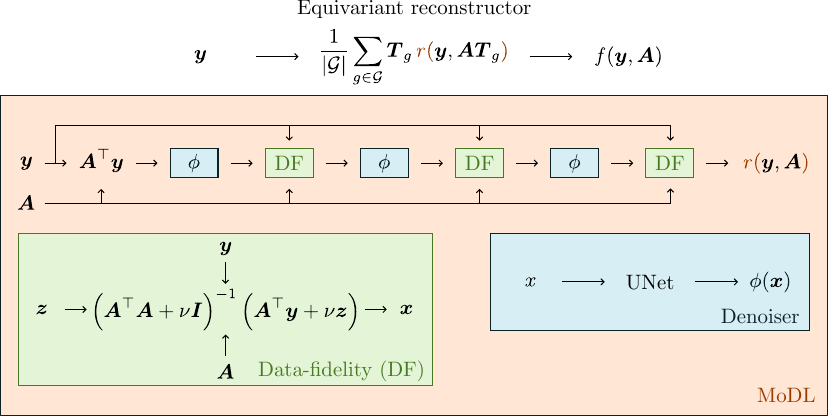}
        \caption{\textbf{Reconstructor equivariant to rotations and flips.} It is the Reynolds averaging of 90\textdegree{} rotations and horizontal and vertical flips as in~\cref{eq:def_equiv_recon_reynolds} of a non-equivariant reconstructor of the MAP type, as in~\cref{eq:def_equiv_recon_map}, implemented using a MoDL unrolled algorithm~\citep{aggarwal19MoDL} with 3 iterations, shared weights, and using a non-equivariant residual UNet~\citep{ronneberger15UNet} as the denoiser architecture.}
        \label{fig:mri_equiv_arch}
    \end{figure}

    In \Cref{th:isEquivRecon}, we consider reconstruction functions of $4$ different structures. In this section, we give additional details about them.

    The artifact removal reconstructor architecture~\citep{jin2017deep} in \cref{eq:def_equiv_recon_artifact_removal}
    consists in a projection step that maps the measurements back into the image space using the adjoint or the pseudo-inverse of the forward matrix, which is immediately followed by a very general trainable network, often a UNet or another encoder-decoder type of network, or sometimes simply a fully convolutional network. \Cref{th:isEquivRecon} states that this reconstructor architecture produces equivariant reconstructors as long as the denoiser architecture is, itself, equivariant in the sense of~\cref{eq:def_equivariant_denoiser}.

    Reynolds averaging for possibly non-equivariant reconstruction functions $r(\y, \A)$ in \cref{eq:def_equiv_recon_reynolds}
    \begin{equation} \tag{\ref{eq:def_equiv_recon_reynolds}}
        f(\y, \A) = \frac 1 {\cardG} \sum_{g \in \G} \tg \fr(\y, \A \tg).
    \end{equation}
    has, as far as we know, not been defined in previous works. It is a natural extension of Reynolds averaging for possibly non-equivariant image-to-image functions or denoisers $\psi(\x)$~\citep{sannai24Invariant,terris24Equivariant},
    \begin{equation}
        \phi(\x) = \frac 1 {\cardG} \sum_{g \in \G} \tg^{-1} \psi(\tg \x)
    \end{equation}
    which makes them equivariant in the sense of~\cref{eq:def_equivariant_denoiser}, to reconstruction functions in order to make them equivariant in the sense of~\cref{eq:def_equiv_recon}. It is a fairly simple way to make a reconstructor equivariant and it is relatively inexpensive for small groups of transformations such as the group of 90\textdegree{} rotations and horizontal and vertical flips~\citep{cohen16Group}. It is however too expensive to be used in practice when the group is relatively large, like the group of shifts, since it would require to evaluate the neural network for as many times as there are pixels in the input image, for each image.

    The maximum a posteriori (MAP) reconstructor architecture
    defined in~\cref{eq:def_equiv_recon_map}
    \begin{equation} \tag{\ref{eq:def_equiv_recon_map}}
        f(\y, \A) = \mathop{\mathrm{argmax}}_{\x \in \mathbb R^n} \ \Big\{ p(\x \mid \y, \A) \Big\}.
    \end{equation}
    is a classical reconstructor architecture that has been used in different ways, including iterative algorithms with a hand-crafted prior~\citep{rudin92Nonlinear,davy25Restarted}, plug-and-play architectures using an iterative approach but with the hand-crafted prior replaced with a pre-trained denoiser~\citep{venkatakrishnan2013plug}, and unrolled architectures where the optimization problem is done with a fixed number of steps and where parts of the algorithm are replaced with trainable modules~\citep{aggarwal19MoDL}. It is often interpreted as a Bayesian maximum a posteriori estimator, but also commonly outside of a Bayesian framework as a variational approach with a data-fidelity term and a regularization term. It is one of the reconstructor architectures that we use for our experiments in \Cref{sec:experiments}. \Cref{th:isEquivRecon} states that these reconstructors are equivariant as long as the prior distribution is invariant in the sense of \cref{eq:def_invariant_model}, or equivalently, if the associated regularization function or negative log-prior is invariant.

    The minimum mean squared error (MMSE) estimator in \cref{eq:def_equiv_recon_mmse}
    \begin{equation} \tag{\ref{eq:def_equiv_recon_mmse}}
        f(\y, \A) = \CondEsp{\x}{\x}{\y, \A}
    \end{equation}
    is generally the target theoretical reconstructor as it achieves the highest theoretical PSNR~\citep{tachella23Sensing}.
    It is generally estimated by reconstructors trained with a mean squared error loss~\citep{chen21Equivariant}, which is notably how we train the supervised baselines in the experiments in \Cref{sec:experiments}.
    \Cref{th:isEquivRecon} states that as long as the prior distribution is invariant in the sense of~\cref{eq:def_invariant_model}, the MMSE reconstructor is equivariant in the sense of~\cref{eq:def_equiv_recon}.

    \changed{\subsection{How to choose transformations for a given inverse problem} \label{sec:decision_aid}}

    {\changedscoped
    There are two major criteria for choosing the transformations for a given application.
    First, the image distribution of interest should be invariant to the chosen transformations.
    Aerial, remote sensing and microscopic images are invariant to translations and rotations as the scenes and subjects they measure exhibit no privileged position and orientation with respect to the image plane. Natural image distributions and texture distributions~\citep{portilla2000parametric} are also generally invariant to translations but they are less invariant to rotations as natural images are typically oriented upward and texture distributions might be anisotropic.

    Second, the transformations should also be chosen in accordance with the measurement operator. \Cref{prop:necCondEq} shows that transformations for which the operator is equivariant do not improve the reconstruction process.
    It is a well-known criterion introduced in the original work on equivariant imaging~\citep{chen21Equivariant} which remains correct in our setting where measurement splitting is added to the theoretical analysis.
    \Cref{tab: forward equivariant} lists correct choices of transformations for common measurement operators.
    }
    \begin{table}[!ht]
        \centering
        \changed{
        \caption{\textbf{Decision table for the transformations.} \Cref{prop:necCondEq} shows that not all transformations are well-suited for all problems. Namely, transformations for which the operator is equivariant introduce no additional information and should not be used. This table specifies which transformations are well-suited for which operator.}
    	\setlength{\tabcolsep}{14pt}
        \label{tab: forward equivariant}
        \begin{tabular}{lcccc}
            \toprule
            Operator & Translation & Rotation & Permutation & Amplitude \\
            \midrule
            Isotropic blur & \texttimes  & \texttimes & \checkmark & \texttimes \\
            Image inpainting & \checkmark & \checkmark & \checkmark & \texttimes\\
            Sparse-view CT & \texttimes & \checkmark & \checkmark & \texttimes \\
            Accelerated MRI & \texttimes & \checkmark & \checkmark & \texttimes\\
            Compressive sensing & \checkmark & \checkmark & \checkmark & \texttimes \\
            \bottomrule
        \end{tabular}}
    \end{table}

    \changed{\subsection{End-to-end algorithm} \label{sec:algorithms}}

    \changed{In this section, we present the end-to-end ES algorithm. It consists in a training step where an equivariant reconstructor is trained using backpropagation against a training dataset of measurements only, after which it is applied to obtain the reconstructions associated to the test measurements. The ES loss used in the training step is computed using the expression in~\Cref{eq:def_split_loss}. The expectations are estimated using Monte Carlo sampling where a single sample is used at training time and $T = 10$ samples are used at inference. In the experiments, we use the splitting ratio $s = 0.8$ corresponding to $m_1 = 0.8m$ and $m_2 = 0.2m$. \Cref{alg:training,alg:inference} show the detailed algorithms in pseudo-code.
    }

    \SetKwInput{KwInit}{init}
    \SetKwFunction{adam}{ADAM}

    \begin{algorithm}[htbp]
    {\changedscoped
    \SetAlgoLined
    \KwIn{Dataset $D = \{\y_i\}_{i \in I}$, forward operator $\A \in \R^{m \times n}$, split ratio $s$, learning rate $\eta$, number of epochs $E$, equivariant reconstructor $f_{\boldtheta}$}
    \KwOut{Trained model $f_{\boldtheta}$}

    \For{epoch $= 1$ \KwTo $E$}{
        Shuffle dataset $D$ \;

        \ForEach{mini-batch $B \subset D$}{
            \KwInit{$L=0$}
            \Indp
            \ForEach{$\y$ in $B$}{
                \KwInit{$\M \in \R^{m_1 \times m}$ a random split s.t $m_1 = s \times m$.}
                Split measurement and operator:  $\y_1 = \M\y, \A_1 = \M\A$ \;
                Compute predictions: $\hat{\x} = f_{\boldtheta}(\y_1, \A_1)$ \;
                Compute sample loss $\ell = \|\A\hat{\x} - \y \|^2_2$ \;
                Aggregate loss: $L \leftarrow L + \ell$ \;
            }
            \Indm
            Mean:  $L \leftarrow \frac{1}{|B|}L $ \;
            Backpropagation: compute gradients $\nabla_{\boldtheta} L$ \;
            Update parameters: $\boldtheta \leftarrow \adam(\boldtheta, \nabla_{\boldtheta} L, \eta)$ \;
        }
    }
    \caption{Equivariant Splitting (Training procedure)}
    \label{alg:training}
    }
    \end{algorithm}

    \begin{algorithm}[htbp]
        {\changedscoped
        \SetAlgoLined
        \KwIn{Measurement $\y \in \R^{m \times n}$, forward operator $\A \in \R^{m \times n}$, split ratio $s$, number of samples $T$, trained reconstructor $f_{\boldtheta}$}
        \KwOut{Reconstructed images $\hat{\x} = \overline{f}_{\boldtheta}(\y, \A)$}
        \KwInit{$\hat{\x}=0$}
        \For{$i= 1$ \KwTo $T$}{
            \Indp
            \KwInit{$\M \in \R^{m_1 \times m}$ a random split s.t $m_1 = s \times m$.}
            Split measurement and operator:  $\y_1 = \M\y, \A_1 = \M\A$ \;
            Compute one predictions: $\hat{\x}_i = f_{\boldtheta}(\y_1, \A_1)$ \;
            Aggregate: $\hat{\x} \leftarrow \hat{\x} + \hat{\x}_i$ \;
            \Indm
        }
        Mean:  $\hat{\x} \leftarrow \frac{1}{T}\hat{\x} $ \;
        \caption{Equivariant Splitting (Inference)}
        \label{alg:inference}
        }
    \end{algorithm}

    \section{Details about the experiments} \label{sec:experiments_supp}

    We use the optimizer AdamW~\citep{loshchilov19Decoupled} for every training with different learning rates for the different inverse problems, a weight decay of $10^{-8}$, beta coefficients equal to $0.9$ and $0.999$ and without the AMSGrad option. For longer trainings, we use step schedulers that divide the learning rate by a factor ranging from 2 to 10 at specific epochs, up to 3 or 4 times.

    In our experiments, we make extensive use of the DeepInverse library~\citep{tachella25DeepInverse} that provides an implementation of the various forward operators and training losses that we use.
    Every model is trained for up to 50 hours on a single GPU, either an NVIDIA H100, GH200 or RTX 4090. \changed{See \Cref{subsec:additional_results} for more details about the training durations.}

    \begin{figure}[htbp]
        \centering
        \input{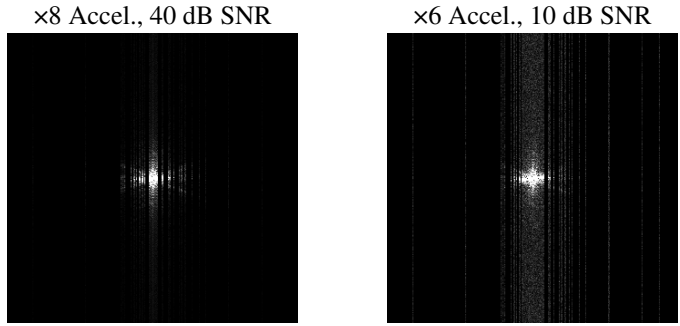}
        \caption{\textbf{Examples of k-space measurements in MRI.} (left) a less noisy problem with less measurements, (right) a noisier problem with more measurements.}
        \label{fig:mri_k-spaces}
    \end{figure}

    \subsection{Datasets}

    \textbf{Image inpainting}     The ground truth images are images obtained from the dataset DIV2K~\citep{agustsson17NTIRE} which contains pictures of natural scenes (landscapes, animals) by first resizing them to a resolution of $256 \times 256$ pixels before extracting a central $128 \times 128$ pixels crop. We synthesize the measurements by corrupting the ground truth images using a single binary mask sampled from a pixel-wise Bernoulli distribution with a $30\%$ chance of keeping each pixel value. In this setting, we do not corrupt the measurements further with additional noise.

    \textbf{MRI}     The dataset consists in 973 pairs of ground truth images from the FastMRI dataset~\citep{zbontar19FastMRI} and associated k-space measurements synthesized using a single coil sensitivity map. The ground truth images have a size of $320 \times 320$ pixels and each corresponds to the middle slice of a different 3d knee acquisition. The k-spaces are synthesized as discrete Fourier transforms subsampled on a single non-regular grid and further corrupted with additive white Gaussian noise with a standard deviation of $0.005$ corresponding to an average SNR of about $40$~dB. The subsampling grid models the coil sensitivity map, is sampled from a Gaussian distribution and corresponds to an acceleration of $8$. The entire dataset is finally split into a train/val split containing 900 images and a test split containing the remaining 73 images. Our implementation uses the work from~\citet{wang25Benchmarking}.

    We also consider an additional dataset obtained in the same way except using a different mask corresponding to an acceleration of $6$ and with a higher noise level corresponding to a standard deviation of $0.1$ or an average SNR of $10$~dB in the k-space domain.

    \subsection{Network architectures}

    The unrolled architecture uses 3 iterations and the weights are shared across different iterations. Every UNet is residual, has 4 scales and has no normalization layer as we find them to be detrimental to the performance.

    The transforms we use in the model architecture and in the metrics are grid-preserving: grid-aligned shifts, 90\textdegree{} rotations, and vertical and horizontal flips. The transforms we use in the EI are grid-aligned shifts and 1\textdegree{} rotations following the prior art~\cite{chen21Equivariant}. Reynolds' averaging is implemented using an unbiased Monte-Carlo estimator whereby a single random transform is sampled at every evaluation to save on computational cost.

    \subsection{\changed{Metrics}}
    \changed{In the experiments, we use three different performance metrics including two standard distortion metrics (PSNR, SSIM) and a new equivariance metric for reconstructors (EQUIV). The peak signal-to-noise ratio is defined, for images with a dynamic range normalized to $[0, 1]$, as the mean squared error expressed in decibels (dB)
    \begin{equation}
    \textrm{PSNR}
    = \Esp{- 10 \log_{10} \left( \| f(\y, \A) - \x \|^2 \right)}{\x,\y}.
    \end{equation}
    The structural similarity index measure~\citep{wang04Image} is a more perceptual metric which is a combination of the empirical means, standard deviations and correlation coefficient associated to the reference and compared images. The complete definition of the metric is too long to be included in this work and we refer the reader to the original publication for more details.}
    In addition to these two standard distortion metrics, we use a new equivariance metric similar to that used by~\citet{chaman21TrulyEquivariant} adapted for our proposed definition of equivariant reconstructors. It is the average mean squared error associated with~\cref{eq:def_equiv_recon} and expressed in dB for readability
    \begin{equation} \label{eq:equiv_metric}
        \textrm{EQUIV} = - 10 \log_{10} \left( \Esp{ \left\| f(\y, \A \tg) -  \tg^{-1} f(\y, \A)\right\|^2}{\y,g}  \right).
    \end{equation}
    It can be roughly understood as a PSNR for equivariance. In every experiment, we use the same group of transformations for EQUIV as we use for the equivariant reconstructor architectures.

    \subsection{Results} \label{subsec:additional_results}

    \begin{table}[!ht]
        \caption{\textbf{MRI results in another setting.} Supplementary results for a different MRI problem (\texttimes 6 Accel., 10 dB SNR) than in~\Cref{tab:MRI} In \textbf{bold}, the best self-supervised metrics. Values: avg $\pm$ st.d.}
        \label{tab:MRI_supp}
        \centering
    	\setlength{\tabcolsep}{23pt}
        \begin{tabular}{lccc}
        \toprule
        & \multicolumn{3}{c}{MRI (\texttimes 6 Accel., 10 dB SNR)} \\
        \cmidrule(r){2-4}
        Method & PSNR $\uparrow$ & SSIM $\uparrow$ & EQUIV $\uparrow$ \\
        \midrule
        Supervised   & 27.39 ± 2.44 & 0.5243 ± 0.1373 & 30.38 ± 2.43 \\
        \midrule
        ES (Ours)    & \textbf{27.33 ± 2.45} & \textbf{0.5126 ± 0.1444} & \textbf{30.32 ± 2.44} \\
        EI           & 27.23 ± 2.41 & 0.5110 ± 0.1421 & 30.21 ± 2.40 \\
        SURE         & 27.08 ± 2.29 & 0.5097 ± 0.1372 & 30.06 ± 2.28  \\
        \midrule
        IDFT         & 23.85 ± 1.05 & 0.3878 ± 0.0272 & 25.14 ± 0.79 \\
        \bottomrule
        \end{tabular}
    \end{table}

    \changed{In this section, we provide more details about the main experiments and present additional experiments.}

    \changed{For each training done in the main experiments, we report the average epoch duration and the number of epochs until the model has finished training. For the sake of comparability, we made sure to conduct the different trainings for the same imaging modality on the same GPU. Namely, we used a single NVIDIA RTX 3090 Ti GPU for every inpainting experiment and a single NVIDIA H100 GPU for every MRI experiment. \Cref{tbl:training_durations} shows that self-supervised methods including ours have generally longer epochs and require more epochs than the fully supervised gold standard. We believe that this is due to a fundamental trade-off whereby the use of ground truth data accelerates the learning procedure while also enabling the use of simpler training algorithms. Moreover, unlike standard implementations of EI and SURE, ES requires only a single network pass per iteration resulting in significantly faster epochs. Overall, ES is computationally more efficient than EI, in addition to being more performant in terms of reconstruction quality.}

    \Cref{fig:compressing_sensing_supp} shows additional compressive sensing results,
    \Cref{tab:MRI_supp} and \Cref{fig:mri_recons_supp} show results on a MRI experiment with settings different from the main one, \Cref{tab:ablation_equivariance_supp} shows extended results for the ablation study on equivariant architectures, and \Cref{fig:mri_k-spaces} shows sample k-spaces from the MRI experiments. \Cref{fig:mri_equiv_arch} shows the equivariant reconstructor architecture that we adopt in the MRI experiments.

    \begin{table}[hptb]
        \caption{\textbf{Extended results on the impact of equivariant architectures.} Adds to~\Cref{tab:ablation_equivariance} the results for EI with a non-equivariant architecture for the inpainting task, results for the noise-dominated MRI task. In \textbf{bold}, the best self-supervised metrics. Values: avg $\pm$ st.d.}
        \label{tab:ablation_equivariance_supp}
        \centering
    	\setlength{\tabcolsep}{13pt}
    	\begin{tabular}{lcccc}
        \toprule
        & & \multicolumn{3}{c}{Image inpainting} \\
        \cmidrule(l){3-5}
        Training loss & Eq. arch. & PSNR $\uparrow$ & SSIM $\uparrow$ & EQUIV $\uparrow$ \\
        \midrule
        Supervised & \checkmark & 28.46 ± 2.97 & 0.8982 ± 0.0411 & 28.46 ± 2.97 \\
        & \texttimes & 28.62 ± 3.03 & 0.9001 ± 0.0415 & 27.85 ± 2.71 \\
        \midrule
        Splitting (Ours) & \checkmark & \textbf{27.45 ± 2.86} & 0.8737 ± 0.0461 & 27.46 ± 2.85 \\
        & \texttimes & 27.20 ± 2.83 & 0.8651 ± 0.0463 & 26.52 ± 2.60 \\
        EI loss & \checkmark & 25.89 ± 2.65 & 0.8332 ± 0.0521 & 25.89 ± 2.65 \\
        & \texttimes & 26.33 ± 2.81 & 0.8451 ± 0.0536 & 25.58 ± 2.52 \\
        MC loss & \checkmark & \ 8.22 ± 2.47   & 0.098 ± 0.055 & 8.22 ± 2.47 \\
        & \texttimes & 8.24 ± 2.48 &  0.100 ± 0.056 & 8.24 ± 2.48 \\
        \midrule
        & & \multicolumn{3}{c}{MRI (\texttimes 8 Accel., 40 dB SNR)} \\
        \cmidrule(l){3-5}
        Training loss & Eq. arch. & PSNR $\uparrow$ & SSIM $\uparrow$ & EQUIV $\uparrow$ \\
        \midrule
        Supervised        & \checkmark & 28.74 ± 2.81 & 0.6445 ± 0.1094 & 31.71 ± 2.83 \\
                          & \texttimes & 28.48 ± 2.68 & 0.6381 ± 0.1082 & 28.78 ± 1.95 \\
        \midrule
        Splitting (Ours)  & \checkmark & \textbf{28.54 ± 2.75} & \textbf{0.6195 ± 0.1188} & \textbf{31.53 ± 2.74} \\
                          & \texttimes & 28.18 ± 2.58 & 0.6104 ± 0.1176 & 27.28 ± 2.10 \\
        \midrule
        & & \multicolumn{3}{c}{MRI (\texttimes 6 Accel., 10 dB SNR)} \\
        \cmidrule(r){3-5}
        Training loss & Eq. arch. & PSNR $\uparrow$ & SSIM $\uparrow$ & EQUIV $\uparrow$ \\
        \midrule
        Supervised        & \checkmark & 27.39 ± 2.44 & 0.5243 ± 0.1373 & 30.38 ± 2.43 \\
                          & \texttimes & 27.33 ± 2.42 & 0.5174 ± 0.1410 & 29.73 ± 2.20 \\
        \midrule
        Splitting (Ours)  & \checkmark & \textbf{27.33 ± 2.45} & \textbf{0.5126 ± 0.1444} & \textbf{30.32 ± 2.44} \\
                          & \texttimes & 27.20 ± 2.38 & 0.5095 ± 0.1430 & 28.66 ± 1.76 \\
        \bottomrule
        \end{tabular}
    \end{table}

    \begin{figure}
        \centering
        \input{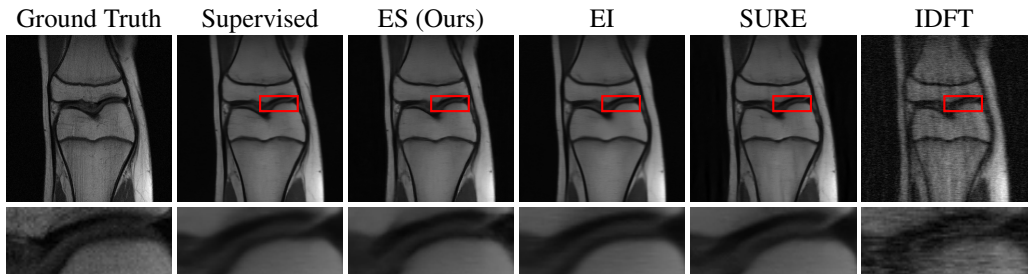}
        \caption{\textbf{Sample reconstructions for noise dominated MRI (\texttimes 6 Accel., 10 dB SNR).} In the noise dominated setting, the different models perform more similarly than in the less noisy setting shown in~\Cref{fig:mri_recons}.}
        \label{fig:mri_recons_supp}
    \end{figure}

    \begin{table}[htbp]
        \centering
        \setlength{\tabcolsep}{24pt}
        \changed{
        \caption{\textbf{Inpainting with push-broom masks.} We consider an unevenly-spaced mask sampled randomly to test our method in a realistic inpainting setting, and an evenly-spaced mask to verify the claim in~\Cref{prop:necCondEq} empirically in a case of almost-equivariance.}
        \label{tab:push_broom}
        \begin{tabular}{lccc}
        \toprule
        & \multicolumn{2}{c}{Image inpainting (Randomly-spaced push-broom mask)} \\
        \cmidrule(r){2-3}
        Method & PSNR $\uparrow$ & SSIM $\uparrow$ \\
        \midrule
        Supervised       & 23.72 ± 2.10 & 0.743 ± 0.051 \\
        ES (Ours)        & 23.04 ± 1.81 & 0.734 ± 0.050 \\
        EI               & 23.05 ± 2.13 & 0.707 ± 0.066 \\
        Incomplete image &  9.44 ± 2.31 & 0.141 ± 0.048 \\
        \midrule
        & \multicolumn{2}{c}{Image inpainting (Evenly-spaced push-broom mask)} \\
        \cmidrule(r){2-3}
        Method & PSNR $\uparrow$ & SSIM $\uparrow$ \\
        \midrule
        Supervised       & 28.37 ± 2.15 & 0.873 ± 0.035 \\
        ES (Ours)        & 21.94 ± 2.12 & 0.617 ± 0.091 \\
        Incomplete image &  9.61 ± 2.42 & 0.152 ± 0.061 \\
        \bottomrule
        \end{tabular}
        }
    \end{table}

    \changed{We present an additional inpainting experiment, designed to simulate a more realistic acquisition scenario and study the consistency of our method.
    Specifically, we consider a noisy inpainting setting in which entire image columns are randomly removed.
    Such sampling patterns naturally arise in satellite imaging systems based on push-broom scanners (PBS)~\citep{xu2016two}.
    As reported in \Cref{tab:push_broom}, the proposed method achieves performance comparable to that of the EI baseline, confirming its consistency under this more practical acquisition model.}

    \changed{To clarify the behavior of ES when the forward operator $\A$ is equivariant or nearly equivariant, we conducted the following experiment:
    we performed an PBS inpainting operation in which every other column of the image is removed.
    In this setting, the operator $\A$ becomes ``almost" equivariant in the sense that for any even horizontal shift $g$, we have $\A \tg = \tg \A$, and the same property holds for all vertical shifts. \Cref{tab:push_broom} shows the reconstruction performance deteriorates significantly.}

    \begin{figure}[htbp]
        \centering
        \begin{tikzpicture}
\begin{axis}[
    xlabel={Epoch},
    ylabel={PSNR (dB)},
    grid=both,
    width=12cm,
    height=6cm,
    xmin=0, xmax=1600,
    ymin=24, ymax=28,
    legend style={at={(1.0, 0.0)}, anchor=south east, draw=none, fill=white},
    legend cell align={left},
]

\addplot[
    color=mygreen,
    mark=none,
    thick
] table[
    col sep=comma,
    x=Step,
    y=Split equi
] {fig/inpainting/courbe_eval_equi.csv};
\addlegendentry{Splitting loss (Eq. arch.)}

\addplot[
    color=teal,
    mark=none,
    thick
] table[
    col sep=comma,
    x=Step,
    y=Split
] {fig/inpainting/courbe_eval_equi.csv};
\addlegendentry{Splitting loss}

\addplot[
    color=myprune,
    mark=,
    thick
] table[
    col sep=comma,
    x=Step,
    y=EI equi
] {fig/inpainting/courbe_eval_equi.csv};
\addlegendentry{EI loss (Eq. arch.)}

\addplot[
    color=myorange,
    mark=,
    thick
] table[
    col sep=comma,
    x=Step,
    y=EI
] {fig/inpainting/courbe_eval_equi.csv};
\addlegendentry{EI loss}

\end{axis}
\end{tikzpicture}
        \caption{
        \textbf{\changed{Performance evolution during training for inpainting.}} Splitting methods perform better than EI independent of the network architecture.
        }
        \label{fig: curve training}
    \end{figure}
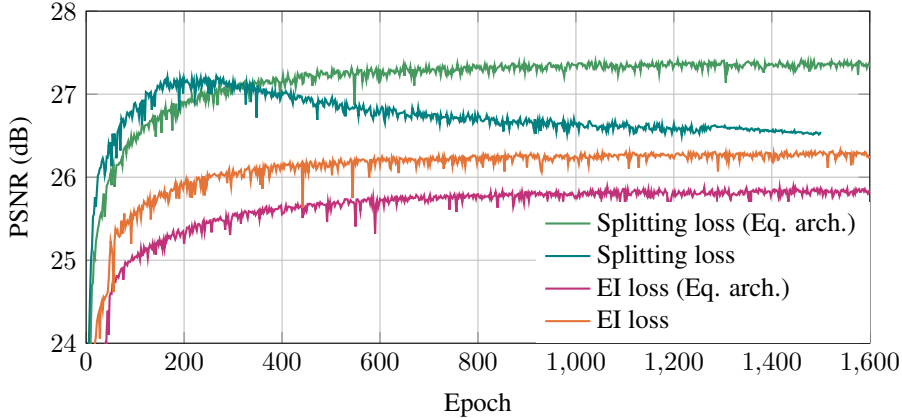

    {\changedscoped
    \subsubsection{Testing the effect of unknown noise distributions}

    In addition to the main experiments where we use the knowledge of the noise distribution in the reconstruction algorithm, we test a more realistic scenario where the noise distribution is unknown and is estimated to be zero for a lack of a better estimate. To do so, we train models using variants of ES and EI where the measurements are assumed to be noiseless even though the training and testing data are nonetheless corrupted with noise. \Cref{tab:unknown_noise} shows that ES also performs better than EI when the noise distribution is unknown. Moreover, it shows that the performance of ES tends to be lower when the noise distribution is unknown, but it does not drop exceedingly which demonstrates the stability of ES to unexpected noise.
    }

        \begin{table}[bthp]
        \vspace{-12pt}
        \centering
        {\changedscoped
        \caption{\textbf{Performance when the noise distribution is unknown.} In the unknown noise scenarios, we use the variants of ES and EI corresponding to assuming that the measurements are noiseless.}
        \label{tab:unknown_noise}
    	\setlength{\tabcolsep}{22pt}
        \begin{tabular}{lccc}
        \toprule
        & & \multicolumn{2}{c}{Image inpainting} \\
        \cmidrule(r){3-4}
        Method & Known & PSNR $\uparrow$ & SSIM $\uparrow$ \\
        \midrule
        Supervised & &  23.72 ± 2.10 & 0.743 ± 0.051 \\
        \midrule
        ES (Ours) & \checkmark & 23.04 ± 1.81 &  0.734 ± 0.050\\
        ES & \texttimes & 22.05 ± 1.36 & 0.59 ± 0.061 \\
        EI & \checkmark & 23.05 ± 2.13 & 0.707 ± 0.066 \\
        EI & \texttimes & 22.01 ± 1.71 & 0.59 ± 0.051 \\
        \midrule
        Incomplete image & & 9.44 ± 2.31 & 0.141 ± 0.048 \\
        \midrule
        & & \multicolumn{2}{c}{MRI (\texttimes 8 Accel., 40 dB SNR)} \\
        \cmidrule(l){3-4}
        Method & Known & PSNR $\uparrow$ & SSIM $\uparrow$ \\
        \midrule
        Supervised   & & 28.74 ± 2.81 & 0.6445 ± 0.1094 \\
        \midrule
        ES (Ours)    & \checkmark & 28.54 ± 2.75 & 0.6195 ± 0.1188 \\
        ES    & \texttimes & 28.52 ± 2.75 & 0.6194 ± 0.1191 \\
        EI           & \checkmark & 27.88 ± 2.64 & 0.5731 ± 0.1299 \\
        EI & \texttimes & 27.89 ± 2.59 & 0.5755 ± 0.1285 \\
        \midrule
        IDFT         & & 23.62 ± 1.90 & 0.5052 ± 0.0900 \\
        \midrule
        & & \multicolumn{2}{c}{MRI (\texttimes 6 Accel., 10 dB SNR)} \\
        \cmidrule(r){3-4}
        Method & Known & PSNR $\uparrow$ & SSIM $\uparrow$ \\
        \midrule
        Supervised   & & 27.39 ± 2.44 & 0.5243 ± 0.1373 \\
        \midrule
        ES (Ours)    & \checkmark & 27.33 ± 2.45 & 0.5126 ± 0.1444 \\
        ES    & \texttimes & 25.73 ± 1.49 & 0.4566 ± 0.0622 \\
        EI           & \checkmark & 27.23 ± 2.41 & 0.5110 ± 0.1421 \\
        EI & \texttimes & 26.02 ± 1.65 & 0.4706 ± 0.0873 \\
        \midrule
        IDFT         & & 23.85 ± 1.05 & 0.3878 ± 0.0272 \\
        \bottomrule
        \end{tabular}
        }
    \end{table}

    { \changedscoped
    \subsubsection{Empirical verification of the equivariance of MAP reconstructors}

    We verify empirically that MAP reconstructors are equivariant as long as the prior is itself equivariant, i.e., the claim made in~\Cref{th:isEquivRecon}. Since they cannot be computed exactly in general, we consider a specific scenario where they can. We assume that 1) $x \sim \mathcal{N}(0, \tau^2 \Id_n)$, 2) $A \in \mathbb R^{m \times n}$ is the two-dimensional decimation operator with decimation rate $2$, 3) $y \mid Ax \sim \mathcal N(0, \sigma^2 \Id_m)$, and 4) that $\tg$ denotes the rotation by angle $g \in \{ 0^{\circ}, 90^{\circ}, 180^{\circ}, 270^{\circ} \}$. Under these assumptions, the prior distribution is equivariant and the MAP estimator in~\cref{eq:def_equiv_recon_map} can be expressed in closed-form as
    \begin{equation}
        f(\y, \A \tg) = \frac{\tau^2}{\tau^2 + \sigma^2} \tg^{-1} \A^\top \y,
    \end{equation}
    with $f(\y, \A)$ being the special case where $\tg = \Id_n$. In the experiment, we set $n = 128 \times 128$ for a grayscale image with $128$ rows and $128$ columns and we compute the equivariance metric in \cref{eq:equiv_metric} (EQUIV) for the MAP reconstructor using 256 i.i.d. samples from the joint distribution. \Cref{tab:empirical_validation_map_equiv} shows the results for every angle and for the average over all angles. As predicted theoretically, perfect equivariance is achieved.
    }

    \begin{table}[ht]
        \centering
        {\changedscoped
        \caption{\textbf{Empirical validation of the equivariance of MAP estimators.}}
    	\setlength{\tabcolsep}{22pt}
        \begin{tabular}{cccccc}
            \toprule
             & 0\textdegree{} &  90\textdegree{} & 180\textdegree{} & 270\textdegree{} & Average \\
             \midrule
             EQUIV & $\infty$ & $\infty$ & $\infty$ & $\infty$ & $\infty$ \\
             \bottomrule
        \end{tabular}
        \label{tab:empirical_validation_map_equiv}
        }
    \end{table}

    \begin{table}[ht]
    \centering
    \changed{
    \caption{\textbf{Training durations.} For each training, we report the average epoch duration and the number of epochs until the model is trained. Inpainting trainings are conducted on a single NVIDIA RTX 3090 Ti GPU and MRI trainings on a NVIDIA H100 GPU.}
    \label{tbl:training_durations}
    \setlength{\tabcolsep}{41pt}
    \begin{tabular}{lcc}
        \toprule
        & \multicolumn{2}{c}{Image inpainting} \\
        \cmidrule(l){2-3}
        Method & Epoch duration (s) & Epochs \\
        \midrule
        Supervised & 12 & 200 \\
        ES (Ours) & 12 & 1000 \\
        EI & 14 & 1000 \\
        \midrule
        & \multicolumn{2}{c}{MRI (\texttimes 8 Accel., 40 dB SNR)} \\
        \cmidrule(l){2-3}
        Method & Epoch duration (s) & Epochs \\
        \midrule
        Supervised & 29 & 200 \\
        ES (Ours) & 24 & 13800 \\
        EI & 53 & 7800 \\
        SURE & 36 & 5700 \\
        \midrule
        & \multicolumn{2}{c}{MRI (\texttimes 6 Accel., 10 dB SNR)} \\
        \cmidrule(l){2-3}
        Method & Epoch duration (s) & Epochs \\
        \midrule
        Supervised & 19 & 70 \\
        ES (Ours) & 19 & 3100 \\
        EI & 75 & 2400 \\
        SURE & 35 & 1200 \\
        \bottomrule
    \end{tabular}}
\end{table}

    \section{Proofs} \label{sec:proofs}

    For the sake of clarity, we state the propositions and theorems a second time before their proofs. We also state and prove the additional~\Cref{technical lemma} which helps prove~\Cref{th:isMMSEOptimal}.

    \begin{lemma} \label{technical lemma}
        The minimization problem
        \begin{equation}
            \min_f \ \E_{\x, \y}\left\{ \| \A f(\y)-\A\x\|^2\right\} \label{eq: mse lemma}
        \end{equation}
        admits as solutions the functions of the form
        \begin{equation}
            f(\y) = \A^\dagger \A \CondEsp{\x}{\x}{\y} + (\boldsymbol{I} - \A^\dagger\A)v(\y) \label{eq: result lemma}
        \end{equation}
        where $v(\y)$ is any function.
    \end{lemma}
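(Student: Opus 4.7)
The plan is to decompose $f(\y)$ via the orthogonal projectors onto the row space and the null space of $\A$ and minimize each component separately.

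First I would observe that $\A^\dagger \A$ is the orthogonal projector onto $\ker(\A)^\perp$ while $\Id - \A^\dagger \A$ is the projector onto $\ker(\A)$, and that the identity $\A \A^\dagger \A = \A$ implies $\A (\Id - \A^\dagger \A) = \boldsymbol{0}$. Therefore the objective in~\cref{eq: mse lemma} depends on $f(\y)$ only through $\A^\dagger \A f(\y)$, so any additive term of the form $(\Id - \A^\dagger \A) v(\y)$ is admissible regardless of $v$, which already accounts for the free term in~\cref{eq: result lemma}.

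Next, by the tower property,
\begin{equation*}
    \E_{\x, \y}\{\|\A f(\y) - \A \x\|^2\} = \E_\y\!\left\{\CondEsp{\|\A f(\y) - \A \x\|^2}{\x}{\y}\right\},
\end{equation*}
so the problem reduces to a pointwise minimization of the quadratic $q_\y(u) = \CondEsp{\|\A u - \A\x\|^2}{\x}{\y}$ over $u \in \R^n$ for almost every $\y$. Expanding and differentiating in $u$ yields the normal equation $\A^\top \A u = \A^\top \A \CondEsp{\x}{\x}{\y}$, which is equivalent to $\A u = \A \CondEsp{\x}{\x}{\y}$ since $\A^\top \A$ is injective on $\ker(\A)^\perp$. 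This is just the standard fact that the MMSE estimator of the random vector $\A\x$ given $\y$ is $\A\,\CondEsp{\x}{\x}{\y}$.

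Finally, I would translate the condition $\A f(\y) = \A \CondEsp{\x}{\x}{\y}$ into the stated parametric form. It means $f(\y) - \CondEsp{\x}{\x}{\y} \in \ker(\A)$, and any element of $\ker(\A)$ can be written as $(\Id - \A^\dagger \A) w$ for some $w \in \R^n$. Decomposing $\CondEsp{\x}{\x}{\y} = \A^\dagger \A \CondEsp{\x}{\x}{\y} + (\Id - \A^\dagger \A) \CondEsp{\x}{\x}{\y}$ and absorbing the nullspace component into the free function by setting $v(\y) = \CondEsp{\x}{\x}{\y} + w(\y)$ recovers~\cref{eq: result lemma}. A direct check (using $\A(\Id - \A^\dagger\A) = \boldsymbol{0}$ and $\A\A^\dagger\A = \A$) confirms that any $f$ of this form attains the minimum, closing the characterization.

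I do not expect significant obstacles. The only subtlety is keeping careful track of the nullspace ambiguity: pointwise minimization determines $f(\y)$ only up to an additive function valued in $\ker(\A)$, which is precisely the origin of the free term $v$.
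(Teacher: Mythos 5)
Your proof is correct, and it reaches the paper's characterization by essentially the same overall route: establish that any minimizer must satisfy $\A f(\y) = \A\,\CondEsp{\x}{\x}{\y}$ (a.e.\ in $\y$), then parametrize the residual nullspace freedom via the projectors $\A^\dagger\A$ and $\Id - \A^\dagger\A$, and close with the converse check that every function of the stated form attains the minimum. The only real difference is how the intermediate identity is obtained: the paper invokes the classical fact that the conditional expectation is the unique minimizer of the mean squared error, applied to the transformed pair $\tilde f(\y) = \A f(\y)$, $\tilde\x = \A\x$, whereas you condition on $\y$ and solve the pointwise quadratic via its normal equations, using $\ker(\A^\top\A) = \ker(\A)$. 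Your version is slightly more elementary and self-contained (it does not need the observation, left implicit in the paper, that the unrestricted MMSE estimator $\A\,\CondEsp{\x}{\x}{\y}$ is itself realizable in the restricted class $\{\A f(\y)\}$), while the paper's is shorter by leaning on the cited theorem; the nullspace bookkeeping, including absorbing $(\Id-\A^\dagger\A)\CondEsp{\x}{\x}{\y}$ into the free function $v$, is the same in both.
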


    \begin{proof} Let's start by stating that $f(\y) = \CondEsp{\x}{\x}{\y}$ is the only solution of~\citep{klenke2008probability}
        \begin{equation}
            \min_f \ \Esp{\lVert f(\y) - \x \rVert^2}{\x,\y}.
        \end{equation}
        For $f$ any solution of \cref{eq: mse lemma}, applying it to $\tilde f(\y) = \A f(\y)$ and $\tilde x = \A\x$ gives
        \begin{equation}
            \A f(\y) = \A \CondEsp{\x}{\x}{\y},
        \end{equation}
        and applying $f(\y) = \A^\dagger \A f(\y) + (\Id - \A^\dagger \A) f(\y)$ with $v(\y) := f(\y)$ yields~\cref{eq: result lemma}.
        Conversely, the objective in~\cref{eq: mse lemma} has the same value no matter the $f$ satisfying~\cref{eq: result lemma} and since at least one of them is solution of~\cref{eq: mse lemma}, they all are.
    \end{proof}

    \theoremIsMMSEOptimal*

    \begin{proof}
    \begin{align*}
        & \mathbb{E}_{\y}\{\Ls_{\textrm{ES}} (\y, \A, f)\} \\
        & = \E_{\y} \left \{\E_g  \{\CondEsp{\| \A\tg f(\y_1, \A_1) - \y\|^2}{\y_1, \A_1}{\y, \A\tg}\} \right\} \\
        & = \E_{\y, g}   \{\CondEsp{\| \A\tg f(\y_1, \A_1) - \y\|^2}{\y_1, \A_1}{\y, g}\} \\
        & = \E_{\y, \y_1, \A_1, g}\{\| \A\tg f(\y_1, \A_1) - \y\|^2 \} \\
        & = \E_{\x, \y_1, \A_1, g}\{\| \A\tg f(\y_1, \A_1) - \A\tg\x\|^2 \} \\
        & = \mathbb{E}_{\y_1, \A_1, \x} \CondEsp{\| \A\tg \big(f(\y_1, \A_1) - \x\big) \|^2}{g}{\y_1,  \A_1} \\
        & = \E_{\y_1, \A_1, \x} \{(f(\y_1, \A_1) - \x)^\top \Q_{\A_1} (f(\y_1, \A_1) - \x)\},
    \end{align*}
    where the third line use that $p(\y_1, \A_1 \mid \y, \A\tg) = p(\y_1, \A_1 \mid \y, g)$ as $\A$ is fixed.
    The fifth line use the noiseless measurements assumption and the invariance of the distribution $p(\x)$.
    The last line uses definition of $\Q_{\A_1}$.
    By applying \Cref{technical lemma}, the global minimizer of the expected loss is given by:
    \begin{equation}
        f^*(\y_1, \A_1) = \Q_{\A_1}^{\dagger}\Q_{\A_1}\CondEsp{\x}{\x}{\y_1, \A_1} + (\boldsymbol{I} - \Q_{\A_1}^\dagger \Q_{\A_1})v(\y_1) \label{eq: minimizer split}
    \end{equation}
    where $v: \R^n \to \R^n$ is any function.
    Moreover, since $\Q$ has full-rank, $\Q_{\A_1}^\dagger = \Q_{\A_1}^{-1}$ and then
    \begin{equation}
        f^{\ast}(\y_1, \A_1) = \CondEsp{\x}{\x}{\y_1, \A_1}.
    \end{equation}
    \end{proof}

    \proptesttime*

    \begin{proof}
        By applying \cref{eq: minimizer split} to $f$ in the definition of $\overline{f}$ we obtain:
        \begin{align*}
            & \overline{f}(\y, \A)  = \E_{\y_1, \A_1 \mid \y, \A} \left\{ \bar{\Q}_{\A}^{-1} \Q_{\A_1}\big(\Q_{\A_1}^{\dagger}\Q_{\A_1}\CondEsp{\x}{\x}{\y_1, \A_1} + (\boldsymbol{I} - \Q_{\A_1}^\dagger \Q_{\A_1})v(\y_1)\big) \right\} \\
            & = \E_{\y_1, \A_1\mid \y, \A} \left\{ \bar{\Q}_{\A}^{-1} \Q_{\A_1}\CondEsp{\x}{\x}{\y_1, \A_1}\right\} + \E_{\y_1, \A_1\mid \y, \A} \left\{ \bar{\Q}_{\A}^{-1} \Q_{\A_1}\Big((\boldsymbol{I} - \Q_{\A_1}^{\dagger} \Q_{\A_1})v(\y_1)\Big) \right\} \\
            & = \E_{\y_1, \A_1\mid \y, \A} \left\{  \bar{\Q}_{\A}^{-1} \Q_{\A_1}\CondEsp{\x}{\x}{\y_1, \A_1} \right\}.
        \end{align*}
    \end{proof}

    \propNecessaryConditionEquivariance*

    \begin{proof}
        Let's assume by contradiction that $\A$ is equivariant with respect to $\tg$
        \begin{equation}
            \A \tg = \tg \A,
        \end{equation}
        and let $\x \in \ker(\A)$.
        \begin{align*}
            \Q_{\A_1}\x &= \Big( \CondEsp{(\A\tg)^\top\A\tg}{g}{\A_1} \Big) \x \\
            &= \CondEsp{(\A\tg)^\top\A\tg \x}{g}{\A_1} \\
            &= \CondEsp{(\A\tg)^\top\tg \A \x}{g}{\A_1} \\
            &= \CondEsp{(\A\tg)^\top\tg \boldsymbol{0}}{g}{\A_1} \\
            &= \boldsymbol{0}
        \end{align*}
        Therefore,
        \begin{equation}
            \ker(\Q_{\A_1}) \supseteq \ker(\A) \supsetneq \{ \boldsymbol 0 \}.
        \end{equation}
        The matrix $\Q_{\A_1}$ has a non-trivial nullspace and thus cannot have full rank. Moreover, since this non-trivial nullspace is the same for all virtual operators $\A \tg$, then $\bar{\Q}_{\A}$ shares the same non-trivial nullspace.
    \end{proof}

    \theoremIsEquivariantReconstructor*

    \begin{proof} We prove each case separately.
        \begin{enumerate}
            \item
            Denoting $\A^\times := \A^\top$ or $\A^\times := \A^\dagger$, \cref{eq:def_equiv_recon_artifact_removal} gives, as $(\A \tg)^\times = \tg^{-1} \A^\times$,
            \begin{equation}
                f(\y,\A \tg) = \phi\left(\tg^{-1}\A^{\times}\y\right),
            \end{equation}
            and since $\phi(\x)$ is equivariant, i.e., \cref{eq:def_equivariant_denoiser} holds, it simplifies to~\cref{eq:def_equiv_recon}.
            \item We start by making the notation show the explicit dependency on $\y$ and $\A$:
            \begin{equation} \label{eq:def_equiv_recon_unrolled_alt}
                \x_0(\y, \A) = \boldsymbol{0}, \quad \x_{k+1}(\y, \A) = \phi\Big(\x_k(\y, \A) - \gamma \nabla_{\x_k(\y, \A)} d(\A\x_k(\y, \A), \y)\Big).
            \end{equation}
            and proceed to show that for $k = 0, \ldots, L$ it holds that
            \begin{equation} \label{eq:unrolled_equiv_iterate}
                \x_k(\y, \A\tg) = \tg^{-1} \x_k(\y, \A).
            \end{equation}
            For $k = 0$, it holds as
            \begin{equation}
                \x_0(\y, \A\tg) = \boldsymbol{0} = \tg^{-1} \x_{0}(\y, \A).
            \end{equation}
            Let's assume that \cref{eq:unrolled_equiv_iterate} holds for $k < L$.
            Applying it and the chain rule
            in~\cref{eq:def_equiv_recon_unrolled_alt} yields
            \begin{equation}
                \x_{k+1}(\y, \A\tg) = \phi\Big(\tg^{-1} \left(x_{k}(\y, \A) - \gamma \nabla_{\x_k(\y, \A)} d(\A\x_k(\y, \A), \y) \right)\Big).
            \end{equation}
            Finally, applying~\cref{eq:def_equivariant_denoiser} in this equation  gives
            \begin{equation}
                \x_{k+1}(\y, \A\tg) = \tg^{-1} \x_{k+1}(\y, \A),
            \end{equation}
            and by induction, as $f(\y, \A) = \x_L(\y, \A)$, \cref{eq:def_equiv_recon} holds.
            \item From \cref{eq:def_equiv_recon_reynolds}, it holds that
            \begin{equation}
                f(\y, \A\tg) = \frac 1 {\cardG} \sum_{h \in \G} \T_h \fr(\y, \A \T_g \T_h),
            \end{equation}
            which, as the group action property holds $T_g T_h = T_{gh}$, rewrites as
            \begin{equation}
                f(\y, \A\tg) = \frac 1 {\cardG} \sum_{h \in \G} \T_h \fr(\y, \A \T_{gh}),
            \end{equation}
            Applying the change of variable $h' = gh$ in this equation  gives
            \begin{equation}
                f(\y, \A\tg) = \frac 1 {\cardG} \sum_{h \in \G} \T_{g^{-1}h} \fr(\y, \A \T_{h}),
            \end{equation}
            which finally, using group action property again $\T_{g^{-1}h} = \T_g^{-1} \T_g$, gives~\cref{eq:def_equiv_recon}.
            \item Taking the negative natural logarithm in~\cref{eq:def_equiv_recon_map} and using Bayes' theorem gives
            \begin{equation} \label{eq:def_equiv_recon_map_alt2}
                f(\y, \A) = \mathop{\mathrm{argmin}}_{\x \in \mathbb R^n} \ \Big\{ d(\A\x, \y) + \rho(\x) \Big\},
            \end{equation}
            where $d(\A\x, \y) = - \log p(\y \mid \A\x)$ and $\rho(\x) = - \log p(\x)$.
            Applying $x' = \mathop{\tg} \x$,
            \begin{equation}
                f(\y, \A\tg) = \tg^{-1} \mathop{\mathrm{argmin}}_{\x \in \mathbb R^n} \ \Big\{ d(\A\x, \y) + \rho\left( \tg^{-1}\x \right) \Big\},
            \end{equation}
            and \cref{eq:def_invariant_model} makes $\rho(\x)$ invariant as well $\rho\left( \tg^{-1} \x \right) = \rho(\x)$. Therefore, \cref{eq:def_equiv_recon} holds.
            \item Let's assume that $p(\x)$ and $p(\A)$ are invariant in the sense of \cref{eq:def_invariant_model}.
            We first prove that
            \begin{equation}\label{eq: y A invariant}
                p(\y, \A\tg) = p(\y, \A).
            \end{equation}
            Using \cref{eq:def_invariant_model}, the invariance of $p(\A)$ and the independence of $\x$ and $\A$, we compute
            \begin{align*}
                p(\y, \A\tg) & = \Esp{p(\y, \A\tg \mid \x)}{\x}
                = \Esp{p(\y, \mid \A\tg, \x) p(\A\tg \mid \x)}{\x} \\
                & = \Esp{p(\y \mid \A\tg, \x) p(\A\tg)}{\x}
                = \Esp{p(\y \mid \A\tg\x) p(\A)}{\x} \\
                & = \Esp{p(\y \mid \A\x) p(\A)}{\x}
                = \Esp{p(\y \mid \A, \x) p(\A \mid \x)}{\x} \\
                & = \Esp{p(\y, \A \mid \x)}{\x}
                = p(\y, \A).
            \end{align*}
            Next we start from
            \begin{equation}
                f(\y, \A\tg) = \CondEsp{\x}{\x}{\y,\A\tg}.
            \end{equation}
            and applying the integral formula for expectations gives
            \begin{equation}
                f(\y, \A\tg) = \int \x \, p(\x \mid \y, \A\tg) \, \mathrm{d}x.
            \end{equation}
            Using Bayes' formula and $p(\y \mid \A, \x) = p(\y \mid \A\x)$, it becomes
            \begin{equation}
                f(\y, \A\tg) = \int \x \, \frac{p(\y \mid \A\tg\x)}{p(\A\tg, \y)} \, p(\A\tg, \x) \, \mathrm{d}x.
            \end{equation}
            By using~\cref{eq: y A invariant},
            the invariance of $p(\A)$ and the independence of $\A$ with $\x$, we obtain
            \begin{equation}
                f(\y, \A\tg) = \int \x \, \frac{p(\y \mid \A\tg\x)}{p(\A, \y)} \, p(\A) p(\x) \, \mathrm{d}x.
            \end{equation}
            With the change of variable $\x' = \tg \x$, and since $\tg$ is unitary, we arrive at
            \begin{equation}
                f(\y, \A\tg) = \int \tg^{-1}\x \, \frac{p(\y \mid \A\x)}{p(\A, \y)} \,  p(\A) p(\tg^{-1}\x) \, \mathrm{d}x.
            \end{equation}
            Finally, applying~\cref{eq:def_invariant_model} in this equation yields~\cref{eq:def_equiv_recon}.
        \end{enumerate}
    \end{proof}

    \theoremLossReducesTo*

    \begin{proof}
    We start from \cref{eq:def_loss_es}
    \begin{align}
        \Ls_{\textrm{ES}} (\y, \A, f) & \triangleq \E_g \left\{\Ls_{\textrm{SPLIT}} (\y, \A\tg, f)\right\} \\
        & =\E_g \left\{ \CondEsp{\| \A\tg f(\y_1, \A_1) - \y\|^2}{\y_1, \A_1}{\y, \A\tg}\right\}
    \end{align}
    As $\A_1$ is a splitting of $\A\tg$, we can write $\A_1 = \M \A \tg$ for $\M$ a splitting matrix.
    We obtain
    \begin{equation}
        \Ls_{\textrm{ES}} (\y, \A, f) = \E_g \left\{ \CondEsp{\| \A\tg f(\M\y, \M\A \tg) - \y\|^2}{\M}{\y, g}\right\}
    \end{equation}
    Applying \cref{eq:def_equiv_recon} and cancelling out $\tg$ with $\tg^{-1}$ yields
    \begin{equation}
        \Ls_{\textrm{ES}} (\y, \A, f) = \E_g \left\{ \CondEsp{\| \A f(\M\y, \M\A) - \y\|^2}{\M}{\y, g}\right\}.
    \end{equation}
    By dropping the expectation in $g$ and rewriting $(\M\y, \M\A)$ as $(\y_1, \A_1)$, this yields in~\cref{eq:loss_reduces_to}.
    \end{proof}

    {\changedscoped
    \section*{Reproducibility statement}
    We share the implementation of our method and experiments (code in \Cref{sec:introduction}) to make our work easier to reproduce.
    }
\end{document}